%% file: iclr2027_conference.tex
\documentclass{article} 
\usepackage{iclr2027_conference_preprint,times}

\usepackage[table]{xcolor}
\definecolor{darkblue}{RGB}{0,0,200}
\definecolor{mtgreen}{RGB}{50,200,20}
\usepackage{hyperref}
\usepackage{booktabs, multirow, longtable}
\usepackage{amsfonts,amsthm,amssymb,mathtools,dsfont}
\usepackage{graphicx, algorithm, algpseudocode, wrapfig}
\usepackage{soul}
\usepackage{url}
\usepackage{subcaption}

\input{math_commands.tex}

\title{MotherTree: Meta-learning on synthetic data improves decision tree training}

\author{Ziyuan Wang\\ 
Chalmers University of Technology \\
 University of Gothenburg\\
\texttt{wangzi@chalmers.se} \\
\And
Fredrik D. Johansson \\
Chalmers University of Technology \\
 University of Gothenburg\\
\texttt{fredrik.johansson@chalmers.se}
}

\iclrfinalcopy 
\begin{document}

\maketitle

\begin{abstract}
Conventional decision tree algorithms produce effective, transparent models that can be audited, communicated, and deployed independently of the training data, but require learning every new task from scratch. In contrast, tabular foundation models demonstrate that meta-learning from a synthetic prior distribution enables strong in-context prediction for previously unseen tasks, especially in small-sample regimes. However, this approach does not produce a standalone model that can be inspected in isolation. We introduce MotherTree, a tabular transformer that meta-learns decision tree induction: given a training set for a new task, it outputs a hard, axis-aligned decision tree, equivalent in form to classically trained trees, in a single forward pass. MotherTree is pre-trained on a synthetic prior using stochastic gradient descent without requiring reference trees for supervision. On established benchmarks with controlled sample size, the approach is competitive with size-matched trees from common algorithms: recursive partitioning, gradient-based tree learning, globally optimal trees, and distillation from tabular foundation models. Notably, MotherTree consistently improves over from-scratch gradient-based learning and acts as a strong initializer: task-specific tuning of the generated tree outperforms the corresponding from-scratch learner on all benchmarks and sample sizes. These results show that meta-learning can provide effective inductive biases for learning stand-alone, small decision tree classifiers.
\end{abstract}

%
%
\section{Introduction}
\label{sec:introduction}


Small decision trees are critical components of prediction and decision systems~\citep{rudin2022interpretable}. They are one of few model classes that can be directly inspected, audited, and communicated~\citep{rudin2019stop}. Still, tree models are usually trained one by one without transfer between learning tasks, failing to benefit from similar tasks being solved before. Tabular foundation models (TFM)~\citep{hollmann2023tabpfn} suggest a different possibility by demonstrating the utility of meta-learning for tabular prediction. By pre-training on numerous data sets from a synthetic prior, TFMs have proven remarkably efficient at learning new tasks from small samples~\citep{hollmann2025accurate}. Inspired, we ask: How can decision tree induction benefit from synthetic-prior meta-learning?


Current tabular foundation models like TabICL~\citep{qu2026tabiclv} and TabPFN~\citep{hollmann2025accurate} exploit in-context learning~\citep{brown2020language}, using Transformers~\citep{vaswani2017attention} to map directly from a labeled training set to test set predictions. After pre-training on tasks from a synthetic prior, in-context TFMs perform the equivalent of both training and prediction for unseen tasks in a single forward pass.  
This strategy has shown tremendous potential in benchmarks~\citep{erickson2026tabarena}, but generalization challenges remain and classical learners, including tree-based models, are still competitive in small samples and preferable in large-sample regimes~\citep{holzmuller2024better,purucker2026beyond}. Moreover, in-context learning does not produce a reusable stand-alone model for a specific task, creating obstacles for interpretation and model sharing.

Decision tree algorithms that use recursive partitioning~\citep{breiman1984classification,quinlan1993c45}, greedily applying hand-crafted splitting rules, remain the default choice for many~\citep{scikittrees}, but do not produce optimal trees and often overfit in small samples. Optimal tree algorithms~\citep{bertsimas2017optimal,aglin2020learning} perform especially well in small-model scenarios but are highly computationally intensive and difficult to scale. Gradient-based tree learning~\citep{yang2018deep,marton2024gradtree} fits all splits and leaf values jointly but has no guarantee of optimality, and has yet to demonstrate generalizable superiority over greedy methods.  All three classes of algorithms could benefit from improving either their sample efficiency or the optimization itself. First attempts have been made to \emph{meta-learn} tree fitting algorithms~\citep{zhuang2024learning}, but rely on access to gold-standard reference trees for supervision. Scalable and efficient meta-learning of decision trees remains a largely open problem. We make the following contributions toward solving it:


\begin{itemize}
\item We formulate decision tree induction as an amortized meta-learning problem and introduce MotherTree, a transformer model that generates standalone hard axis-aligned decision tree classifiers in a single forward pass. MotherTree uses a tree-structured decoder, trainable without reference trees, enabling large-scale pre-training on data sets from a diverse synthetic prior distribution, and generalization to real-world tasks with different feature sets. 
\item We evaluate MotherTree on 64 benchmark classification tasks from TabArena and OpenML CC-18, in controlled small-sample regimes, comparing against greedy, optimal, gradient-based, and distillation-based tree learners.  MotherTree consistently outperforms the corresponding from-scratch learner, GradTree~\citep{marton2024gradtree}, e.g., exceeding its AUC by 0.069 on TabArena and 0.077 on OpenML CC-18 for $B=100$ training samples (Figure~\ref{fig:auc_context_size}).
\item By fine-tuning the generated trees on task-specific data (MotherTree-Tune), we create additional performance gains that grow with the size of the training set, making the models competitive with all four baseline classes of tree learners. Synthetic experiments on learning noiseless parity functions suggest that the utility of MotherTree lies not only in improved sample-efficiency but in improved initialization for local optimization. 
\end{itemize}

%
%
\section{Related work}
\label{sec:related}

Our goal is to amortize learning of hard, axis-aligned decision trees. \emph{Hard} trees use discrete, non-differentiable rules and deterministic routing, and \emph{soft} trees use continuous, differentiable rules with fuzzy or probabilistic routing; see surveys by~\citet{li2022survey,costa2023recent}. \emph{Oblique} trees~\citep{costa2023recent,Differentiable2025_Qiangqiang,norouzi2015efficient} use linear combinations of feature values rather than axis-aligned splits formed by a single feature and threshold. 
Common learning strategies for hard trees include greedy recursive partitioning~\citep{breiman1984classification,quinlan1993c45} and global discrete optimization~\citep{bertsimas2017optimal,hu2019optimal,aglin2020learning}. 
Both hard and soft trees may be parameterized as neural networks and trained using stochastic gradient descent~\citep{tanno2019adaptive,yang2018deep,li2022survey,Differentiable2025_Qiangqiang}. 
Soft trees use fully differentiable parameterizations ~\citep{pace2022poetree,irsoy2012soft,tanno2019adaptive} but require post-hoc rounding to form hard, axis-aligned trees. 
Moreover, gradient-descent strategies are sensitive to initialization for both hard and soft trees. What's more, all three classes of algorithms above fit trees from scratch, failing to benefit from solving such tasks before. 


The goal of meta-learning is to fit a base model that can quickly learn a new task from a small sample of task-specific data~\citep{schmidhuber1992learning,ravi2017optimization,finn2017model}. MotherNet~\citep{muller2025mothernet} applied hypernetworks~\citep{ha2016hypernetworks} to meta-learning for tabular prediction, training a transformer to take a small tabular data set as input and output the parameters of a task-specific multi-layer perceptron (MLP). 
\citet{mahfoud2025learning} considered tree learning as ``amortized structure inference'', but not in the sense of meta-learning from pretraining tasks. \citet{knauer2025oh} use a large language model to output a decision tree based purely on metadata, not on tabular samples. Distillation~\citep{frosst2017distilling} from a tabular foundation model can also meta-learn trees. Our closest related work is MetaTree~\citep{zhuang2024learning}, a Transformer for generating tree classifiers, pre-trained with supervision from ground-truth reference trees. MetaTree is limited by the need to construct high-quality ground-truth trees for each pre-training task. In contrast, MotherTree learns directly from instance labels, which proved preferable in our experiments.


%
%
\section{MotherTree: Meta-learned decision tree training }
\label{sec:methods}

\begin{figure}
    \centering
    \includegraphics[width=0.95\linewidth]{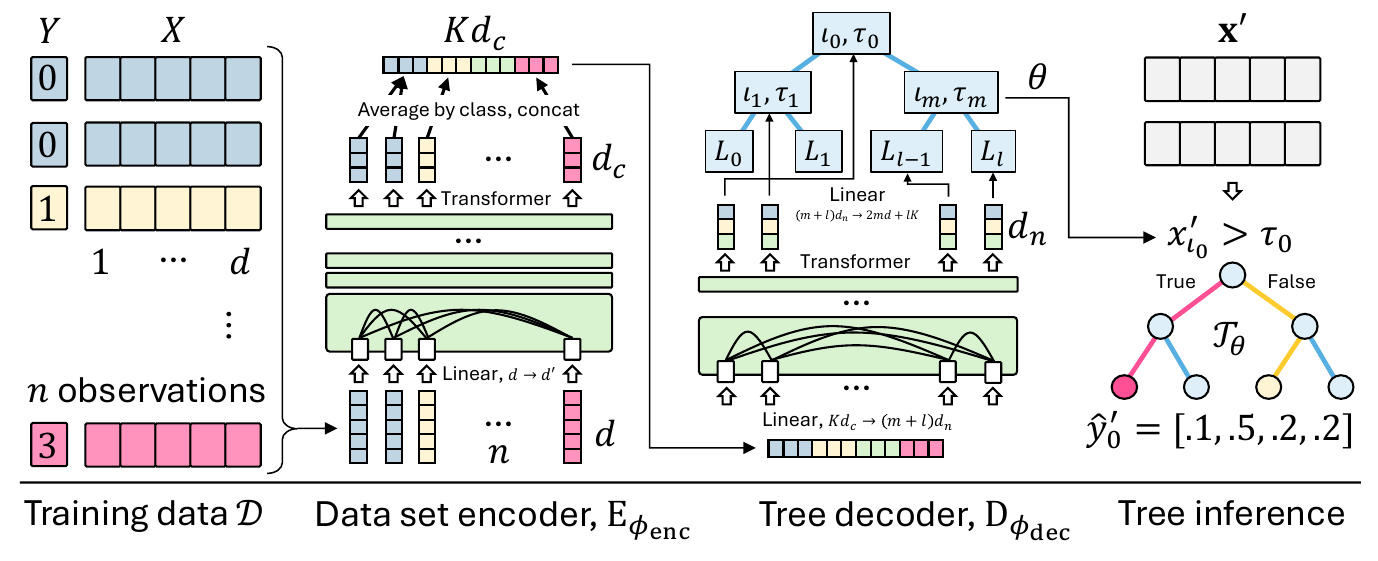}%
    \vspace{-1.0em}%
    \caption{When applying MotherTree to a new task, labeled training data is fed to the data set encoder to generate sample embeddings that are aggregated by class, forming a single data set embedding. The tree decoder maps the data set embedding to one embedding per node in the generated tree using a transformer, ultimately predicting the node split $(\iota_i, \tau_i)$ (for internal nodes) or value $L_j$ (for leaves). These define an axis-aligned, deterministic tree for stand-alone inference. \label{fig:illustration} }
\end{figure}

We propose MotherTree---a meta-learned decision tree generator. The model combines a tabular data encoder with a tree-structured decoder to learn a map $\operatorname{M}_\phi$ from labeled training set $\mathcal{D}$ to the parameters of an axis-aligned decision tree classifier, pre-trained end-to-end with instance labels as supervision. See Figure~\ref{fig:illustration}. For a new task, MotherTree uses a single forward pass to fit a tree that can be further tuned if desired. In this way, the map $\operatorname{M}_\phi$ acts exactly like a tree learning algorithm. 

\subsection{Meta-learning decision tree parameters }
\label{sec:problem_formulation}
Let $\mathcal{D}=\{(\mathbf{x}_i,y_i)\}_{i=1}^{n}$ denote a labeled training set for a classification problem (task) $\mathcal{P}$, where $\mathbf{x}_i\in\mathbb{R}^{d}$ denotes the input features and $y_i\in\{1,\ldots,K\}$ the labels of an instance $i$. Our goal is to \emph{learn} a procedure, by solving pre-training problems $\gP_1, ..., \gP_N$, that can take the unseen data set $\mathcal{D}$ and construct a hard, axis-aligned decision tree classifier, with small error on new examples from $\mathcal{P}$. We approach this problem by constructing decision trees based on a continuous parameterization.

Consider a complete binary tree with internal nodes $\mathcal{I}$ and leaf nodes $\mathcal{L}$, where $m=|\mathcal{I}|$ and $l=|\mathcal{L}|$. Following GradTree~\citep{marton2024gradtree}, we characterize the tree using parameters $\theta =(I,T,L)$, where $I,T\in\mathbb{R}^{m\times d}$ represent the feature selection indices and thresholds of internal nodes, and $L\in\mathbb{R}^{l\times K}$ the leaf values.  The topology of the tree is kept fixed as a complete binary tree with a given depth $D$. The splitting rule $s_j$ of each internal node $j\in\mathcal{I}$ is determined by 
\begin{equation}%
\label{eq:routing}%
\iota_j=\arg\max_{r\in\{1,\ldots,d\}} I_{j,r},
\qquad
\tau_j=T_{j,\iota_j},
\qquad 
s_j(\mathbf{x}) = \mathds{1}[x_{\iota_j} \leq \tau_j],%
\end{equation}%
such that $s_j(\mathbf{x})$ routes the sample $\mathbf{x}$ to $j$:s first child if true, and right otherwise. Just like standard decision trees, a tree $\mathcal{T}_{\theta}$ parameterized by $\theta = (I, T, L)$, routes each input $\mathbf{x}$ deterministically to a leaf $l(\mathbf{x})$ and outputs predicted class probabilities $\mathcal{T}_\theta(\mathbf{x}) = L_{l(\mathbf{x})} \in \Delta^{K-1}$.

Learning a tree for a problem $\mathcal{P}$ amounts to choosing parameters $\theta$ such that the classification error of the induced tree $\mathcal{T}_{\theta}$ is minimized on $\mathcal{P}$. MotherTree amortizes learning across tasks by fitting a reusable \emph{tree generator} $\operatorname{M}_\phi:\mathcal{D}\mapsto\theta$, whose parameters $\phi$ are fit on tasks from a synthetic prior. Given a training set $\mathcal{D}_{\mathrm{new}}$ for a new task $\mathcal{P}_{\mathrm{new}}$, parameters $\theta_{\mathrm{new}}$ and predictions $\hat{y}$ are computed by
\[
\theta_{\mathrm{new}}
=
(I_{\mathrm{new}},T_{\mathrm{new}},L_{\mathrm{new}})
=
\operatorname{M}_\phi(\mathcal{D}_{\mathrm{new}}),
\qquad
\hat{y}=\mathcal{T}_{\theta_{\mathrm{new}}}(\mathbf{x}),  \mbox{for an instance $\mathbf{x}$ of $\mathcal{P}_{\mathrm{new}}$}.
\]
Once parameters $\theta$ are generated, the tree $\mathcal{T}_{\theta_{\mathrm{new}}}$ can be used without access to $\operatorname{M}_\phi$ or $\mathcal{D}_{\mathrm{new}}$.

%
%
\subsection{Tabular data set encoder}
\label{sec:encoder}
The MotherTree generator $\operatorname{M}_\phi$ with parameters $\phi = (\phi_{\mathrm{enc}}, \phi_{\mathrm{dec}})$ composes a tabular data set encoder $\operatorname{E}_{\phi_{\mathrm{enc}}}$ and a tree-structured decoder $\operatorname{D}_{\phi_{\mathrm{dec}}}$. Given a labeled training set $\mathcal{D}$ of $n$ rows, the encoder first embeds $\mathcal{D}$ into row-wise representations
\(
H 
=
[\mathbf{h}_1,\ldots,\mathbf{h}_n]^\top
\in \mathbb{R}^{n\times d_c},
\)
where $d_c$ is a hyperparameter. In the default setting, $H$ is computed using the MotherNet \citep{muller2025mothernet} encoder, which applies a multi-layer transformer with row-wise attention to a linear projection of the input features (see Appendix~\ref{app:architecture} for further details). 
The row representations are then aggregated by class and concatenated into a single vector-valued data set embedding,
\[
\operatorname{E}_{\phi_{\mathrm{enc}}}({\mathcal{D}})
=
[\overline{\mathbf{h}}_1 ; \ldots ;\overline{\mathbf{h}}_K]
\in \mathbb{R}^{K d_c}
\quad
\mbox{where}
\quad 
\forall c\in [K]:
\overline{\mathbf{h}}_c
=
\frac{1}{|\mathcal{I}_c|}
\sum_{i\in\mathcal{I}_c}
\mathbf{h}_i,
\quad
\mathcal{I}_c=\{i:y_i=c\}.
\]
This embedding completes the encoder $\operatorname{E}_{\phi_{\mathrm{enc}}}$ and serves as the input to the tree-structured decoder.

\paragraph{Encoding for standalone inference.}
Like tabular foundation models, e.g., \citet{hollmann2025accurate,qu2026tabiclv}, based on prior-fitted networks (PFN)~\citep{muller2022transformers},  MotherTree is pre-trained on tasks from a synthetic prior, but differs fundamentally in how the model is used for inference. In-context PFN models jointly encode both training and test sets when making predictions, allowing test set embeddings to attend to the training set. In contrast, the MotherTree generator $\operatorname{M}_\phi$ processes only the training set, acting as a hypernetwork that directly instantiates a task-specific decision tree. Consequently, \emph{there is no test-to-training-set attention}. Once generated, the tree is a stand-alone model; test-time prediction requires only tree traversal, but no access to the training set or the tree generator.
This distinction changes the role of the data set encoder. MotherTree requires the encoder to summarize the training set for model generation rather than to measure similarities between training and test samples. Consistent with this, replacing the MotherNet encoder with the more expensive TabICLv2 encoder does not improve performance; see Appendix~\ref{app:TABICL encoder}, suggesting that a stronger in-context prediction backbone does not translate into a better tree generator.

%
%
\subsection{Ancestral decoder for axis-aligned trees}
We introduce a \textit{tree-aware transformer decoder} $\operatorname{D}_{\phi_{\mathrm{dec}}}$ to map the data set embedding to decision tree parameters, $\theta = \operatorname{D}_{\phi_{\mathrm{dec}}}(\operatorname{E}_{\phi_{\mathrm{enc}}}(\mathcal{D})).
$
For a complete binary tree with $m$ internal nodes and $l$ leaves, we construct $N=m+l$  token pre-embeddings, $\mathbf{d}^{(0)}_1, ..., \mathbf{d}^{(0)}_N$, each corresponding to a specific node in the tree, by a linear projection of the data set embedding,%
\[
H^{(0)}
\coloneqq 
[\mathbf{d}^{(0)}_1,\ldots,\mathbf{d}^{(0)}_N] 
= 
W_{\mathrm{emb}}\operatorname{E}_{\phi_{\mathrm{enc}}}(\mathcal{D})
+\mathbf{b}_{\mathrm{emb}}
\in \mathbb{R}^{N \times d_n},
\qquad
\forall i : \mathbf{d}^{(0)}_i\in\mathbb{R}^{d_n}.
\]
The node pre-embeddings are passed through a multi-layer transformer before being used to output splitting parameters $(I_j, T_j)$ for internal nodes $j$ and leaf values $L_k$ for leaves $k$. To encode the root-to-node dependency structure, we apply a masked attention with an
\textit{ancestral attention mask} $M$ that allows each node to attend only
to itself and its ancestors. Specifically, 
\begin{equation}\label{eq:attentionmask}
\mbox{for nodes $i,j \in [N]:$} \quad
M_{i,j}
=
\begin{cases}
0, & \text{if } \mathrm{Ancestor}(j,i),\\
-\infty, & \text{otherwise},
\end{cases}
\end{equation}
where $\mathrm{Ancestor}(j,i)$ indicates that node $j$ is an ancestor
of node $i$, including $i$ itself. This implicitly conditions each node on the
splits of its ancestors while preventing information flow from
unrelated branches of the tree. An $\ell$-layer transformer with mask $M$ as in \eqref{eq:attentionmask} is applied to $H^{(0)}$, producing node embeddings $H^{(\ell)} = [\mathbf{d}^{(\ell)}_1,\ldots,\mathbf{d}^{(\ell)}_N]$. The decoder $\operatorname{D}_{\phi_{\mathrm{dec}}}$ is finalized by applying a linear layer to each  embedding $\mathbf{d}_j^{(\ell)}$, producing $\theta_j=(I_j, T_j, L_j)$, and concatenating the results. 

\paragraph{Comparison with MLP decoders.} MotherNet~\citep{muller2025mothernet} uses a simple MLP decoder $\theta=\mathrm{MLP}(\operatorname{E}_{\phi_{\mathrm{enc}}}(\mathcal{D}))$.  While effective for generating the parameters of an MLP, this decoder does not explicitly model structural dependencies among tree parameters. In ablation experiments (Section~\ref{sec:ablation}), we find that ancestral decoding performs substantially better in MotherTree. Moreover, we do not utilize the low-rank strategy of the MotherNet decoder, as early ablations found no utility in this additional structure, likely partly due to the tree requiring fewer parameters than the MLP.


%
%
\subsection{Pre-training, tuning, and inference}
\label{sec:training}
The MotherTree generator $\operatorname{M}_\phi = \operatorname{D}_{\phi_{\mathrm{dec}}} \circ  \operatorname{E}_{\phi_{\mathrm{enc}}}$ is pre-trained (prior-fitted) on data sets from a large number of synthetic tasks, sampled from a structural causal model (SCM) prior $p_{\mathrm{syn}}(\mathcal{D})$. For each sampled pre-training data set $\mathcal{D}$, the generator is trained end-to-end through the tree by minimizing
\[
\mathcal{L}_{\mathrm{meta}}(\phi;\mathcal{D})
=
-\frac{1}{n}\sum_{i=1}^{n}
\log p_{\operatorname{M}_\phi(\mathcal{D})}(y_i\mid\mathbf{x}_i),
\qquad
\phi^{*}
=
\arg\min_{\phi}
\mathbb{E}_{\mathcal{D}\sim p_{\mathrm{syn}}}
\left[\mathcal{L}_{\mathrm{meta}}(\phi;\mathcal{D})\right].
\]
The estimated probability of each class is given by the corresponding element of the tree output, $\hat{p}_{\operatorname{M}_\phi(\mathcal{D})}(y_i\mid\mathbf{x}_i) = \mathcal{T}_{\operatorname{M}_\phi(\mathcal{D})}(\mathbf{x}_i)_{y_i}$. We jointly optimize the parameters of the data set encoder and tree decoder, $\phi = (\phi_{\mathrm{dec}}, \phi_{\mathrm{enc}})$, across tasks using stochastic gradient descent. Unlike MetaTree~\citep{zhuang2024learning}, this does not rely on supervision from learned reference trees. %
Computing $\mathcal{T}(\mathbf{x}_i)$, and therefore the learning objective, involves non-differentiable operations for both feature selection (arg max) and routing (threshold) as per \eqref{eq:routing}. Following \citet{marton2024gradtree}, we overcome this by passing gradients using the straight-through operator for both operations.  

\paragraph{Post-hoc parameter tuning.}
By default, we re-calibrate the generated leaf parameters $L_{\mathrm{new}}$ using task-specific $\mathcal{D}_{\mathrm{new}}$, akin to \citet{Differentiable2025_Qiangqiang}, by setting their values equal to the observed class proportions of samples reaching each leaf. Additionally, we propose a variant dubbed MotherTree-Tune that fine-tunes \emph{all} generated parameters $\theta_{\mathrm{new}} = (I_{\mathrm{new}}, T_{\mathrm{new}}, L_{\mathrm{new}})$ to $\mathcal{D}_{\mathrm{new}}$ using stochastic gradient descent, treating $\theta_{\mathrm{new}}$ as free parameters. In this way, we can use the generated tree as an \emph{initialization} for task-specific optimization using the same procedure as GradTree.

\section{Experiments}

We evaluate MotherTree in two tabular benchmarks and a synthetic task designed to illustrate the strengths and weaknesses of tree learning algorithms. We pay close attention to the comparison with GradTree as it uses the same tree parameterization but does not exploit meta-learning.
Section \ref{evaluation on benchmarks} showcases MotherTree's meta-learning capabilities in the low-sample regime, especially in binary classification, and demonstrates that MotherTree provides a strong initialization for task-specific tuning (MotherTree-Tune). Ablations demonstrate the utility of our ancestral decoder. In Section \ref{evaluation on xor}, we study the strength of MotherTree as a tree optimizer in a synthetic task with known solution.

\label{baseline and protocal}
\paragraph{Baselines.} We compare MotherTree to a suite of single-tree learning algorithms, including (i) Greedy and globally optimal trees: CART decision tree implemented in scikit-learn~\citep{scikit-learn}, Tsallis~\citep{balcan2024learning} and DL85 ~\citep{aglin2020learning}; (ii) Gradient-based trees: RADDT~\citep{Differentiable2025_Qiangqiang}, GradTree~\citep{marton2024gradtree}, DNDT~\citep{yang2018deep}; (iii) Meta tree learner:  MetaTree ~\citep{zhuang2024learning}, LLM-Tree ~\citep{knauer2025oh}; (iv) Models distilled from TabPFN2.6~\citep{hollmann2025accurate}: CART$_{\leftarrow \mathrm{TabPFN}}$ and GradTree$_{\leftarrow \mathrm{TabPFN}}$. Details regarding the distillation methods are stated in Appendix \ref{app:distillation}. We use existing PyTorch implementions where available and re-implement other gradient-based baselines in PyTorch. Hyperparameters are tuned using grid search. Details on baselines can be found in Appendix \ref{Hyperparameter tuning}. 
Since we aim to compare the learning properties of different algorithms, rather than hyperparameter tuning, we fix the topology of all trees to have the same maximum depth,  $D = 5$. 

\paragraph{Implementation Details.} MotherTree is pre-trained in two stages. In the first stage, we utilize the synthetic prior from TabICL~\citep{qu2025tabicl} with $400k$ training steps. For the second stage, the TabICLv2~\citep{qu2026tabiclv} prior is used for training $125k$ steps. The training batch size is set to 128 and 64, respectively. The maximum learning rate is set to be $3\times 10^{-5}$ with LinearLR scheduler and cosine annealing scheduler for each stage.  
Training was conducted on 4 $\times$ NVIDIA GH200 120GB. See Appendix~\ref{appendix:training-details} for details and ablations for pre-training and hyperparameters.

\paragraph{Evaluation protocol.}
We evaluate all methods under a varying training set budget $B \in \{100, 200, 400, 800, 1000\}$, where $B$ defines the \emph{maximum} number of training examples available to the learner in a given task. For each of five random seeds, we shuffle the samples and construct a train-test split subject to this budget. Details for sampling, preprocessing, and missing value handling are shown in Appendix \ref{app:sample-method} \& \ref{app:preprocess}. 
When the data set size is insufficient to allocate $B$ training examples while retaining a test set, we use a balanced train--test split instead.
Performance (uncertainty) is measured using the ROC-AUC with average (standard dev.) computed over the five splits and the data sets within each benchmark. We report the average AUC ranks in critical difference diagrams, for which staistical significance is assessed using the Friedman test followed by the Nemenyi test, following the protocol in TabArena~\citep{erickson2026tabarena}.
%
To evaluate the tree structure of gradient-based tree learners, we measure the number of uninformative paths---root-to-leaf paths on which any two splitting rules contradict each other. Finally, we report the leaf utilization---the fraction of leaf nodes which contain at least one sample, in a fixed draw of 500 test set samples.

%
%
\subsection{Evaluation on tabular benchmarks}
\label{evaluation on benchmarks}

We evaluate all methods on $64$ unique tabular classification data sets from two standard benchmarks:
(i) $30$ data sets from the small version of OpenML-CC18~\citep{bischl2017openml} 
($17$ binary, $13$ multiclass) as used in MotherNet~\citep{muller2025mothernet}, and (ii) $38$ data sets from the TabArena
benchmark~\citep{erickson2026tabarena} ($30$ binary, $8$ multiclass). The benchmarks overlap on 4  tasks; when aggregating over both, we keep one of each duplicated task. See Appendix~\ref{app:evaluation} for details.

\begin{figure}[t]
    \centering
    \includegraphics[width=0.8\linewidth]{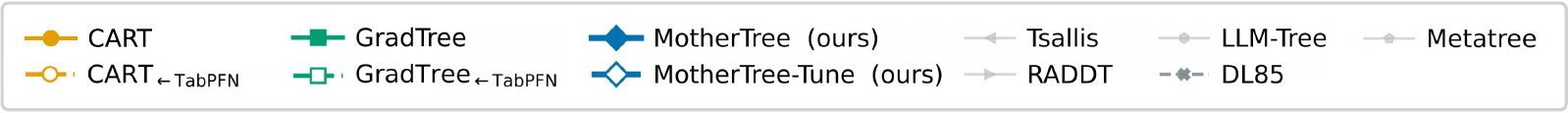}
    \begin{subfigure}[t]{0.49\linewidth}
        \centering
        \vspace{-0.5em}
        \includegraphics[width=\linewidth,trim={0 3.2cm 0 1.3cm},clip]{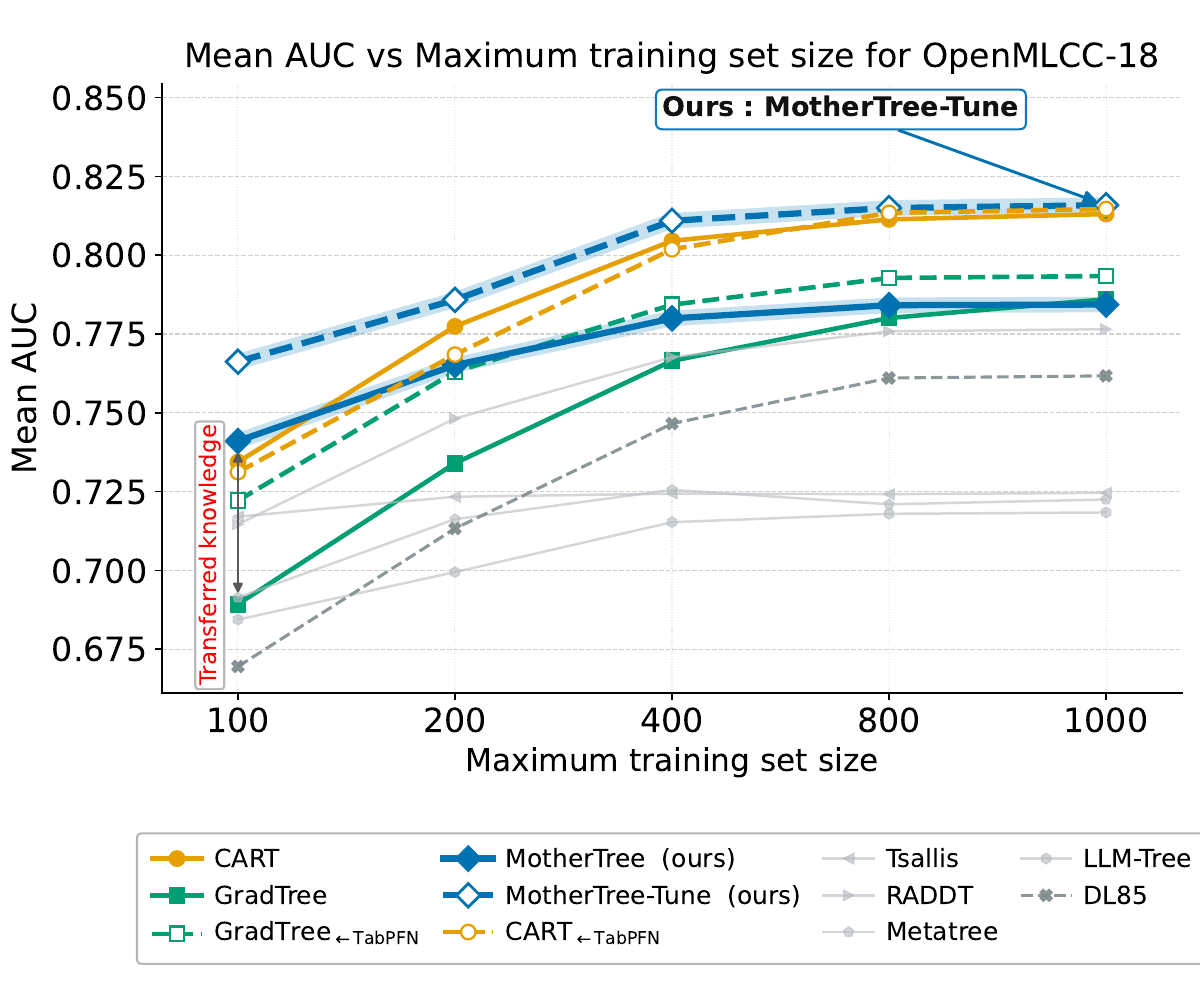}
        \caption{OpenML-CC18-Small \label{fig:auc_openml}}
    \end{subfigure}
    \hfill
    \begin{subfigure}[t]{0.49\linewidth}
        \centering
        \vspace{-0.5em}
        \includegraphics[width=\linewidth,trim={0 3.2cm 0 1.3cm},clip]{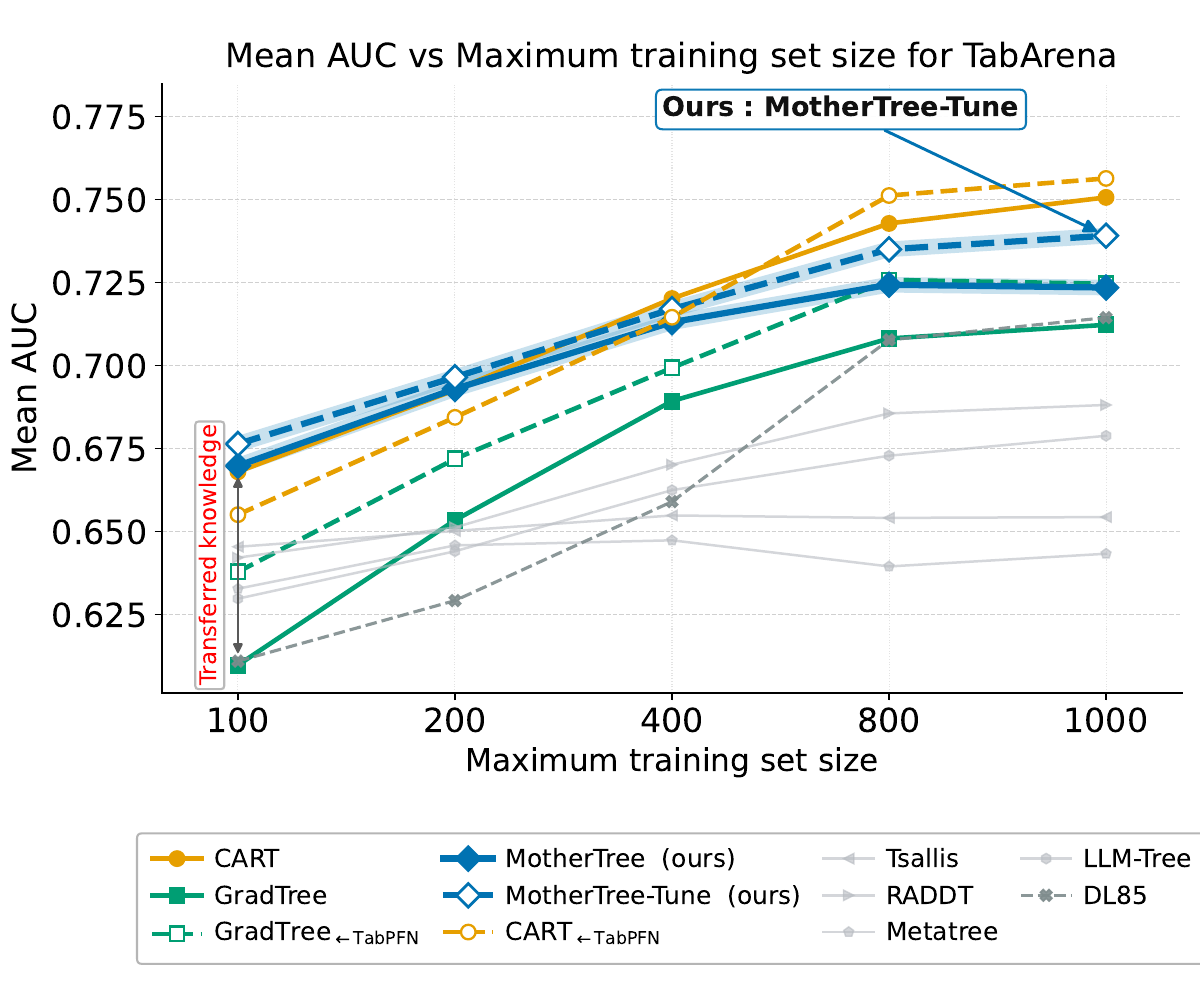}
        \caption{TabArena \label{fig:auc_tabarena}}
    \end{subfigure}
    \vspace{-0.2em}
    \caption{Mean AUC for varying training set sample budget, $B$. See also Appendix Table~\ref{tab:overall_comparison}.}
    \label{fig:auc_context_size}
    \vspace{-0.5em}
\end{figure}
\begin{table}[t]
\centering
\caption{Experimental results (avg. AUC $\pm$ std.) across tasks with training set budget $B=200$ on two benchmarks. Tables for all sample size budgets can be viewed in Appendix \ref{appendix: overall results across all lengths}. GradTree and MotherTree share tree parameterization. Best results shown in \textbf{bold}, second-best results \underline{underlined}. }
\vspace{-0.6em}
\begin{tabular}{llcccc}
\toprule
&& \multicolumn{2}{c}{TabArena {\scriptsize $(B=200)$}}
& \multicolumn{2}{c}{OpenML-CC18{\scriptsize -Small $(B=200)$}} \\
\cmidrule(lr){3-4} \cmidrule(lr){5-6}


& Method
& Binary & Multiclass
& Binary & Multiclass \\
\midrule

& CART
& $0.670_{\pm 0.107}$
& \underline{0.779}$_{\pm 0.170}$
& $0.759_{\pm 0.123}$
& \textbf{0.802}$_{\pm 0.110}$ \\

& CART$_{\leftarrow \mathrm{TabPFN}}$
& $0.659_{\pm 0.099}$
& \textbf{0.780}$_{\pm 0.165}$
& $0.748_{\pm 0.125}$
& $0.795_{\pm 0.105}$ \\

& $(\alpha^*, \beta^*)$ Tsallis
& $0.620_{\pm 0.108}$
& $0.762_{\pm 0.167}$
& $0.680_{\pm 0.155}$
& $0.780_{\pm 0.115}$ \\

& DNDT
& $0.595_{\pm 0.079}$
& $0.595_{\pm 0.055}$
& $0.650_{\pm 0.108}$
& $0.643_{\pm 0.076}$ \\

& RADDT
& $0.627_{\pm 0.100}$
& $0.744_{\pm 0.152}$
& $0.710_{\pm 0.135}$
& $0.798_{\pm 0.124}$ \\

& DL85
& $0.604_{\pm 0.088}$
& $0.722_{\pm 0.142}$
& $0.691_{\pm 0.131}$
& $0.743_{\pm 0.101}$ \\

& MetaTree
& $0.632_{\pm 0.085}$
& $0.696_{\pm 0.145}$
& $0.705_{\pm 0.141}$
& $0.731_{\pm 0.103}$ \\

& LLM-Tree
& $0.641_{\pm 0.092}$
& $0.654_{\pm 0.105}$
& $0.669_{\pm 0.145}$
& $0.739_{\pm 0.118}$ \\

\midrule
\rowcolor{mtgreen!4} 
& GradTree
& $0.635_{\pm 0.081}$
& $0.723_{\pm 0.141}$
& $0.725_{\pm 0.132}$
& $0.746_{\pm 0.118}$ \\

\rowcolor{mtgreen!4} 
& GradTree$_{\leftarrow \mathrm{TabPFN}}$
& $0.655_{\pm 0.095}$
& $0.715_{\pm 0.146}$
& $0.752_{\pm 0.128}$
& $0.774_{\pm 0.104}$ \\

\rowcolor{mtgreen!4} 
& MotherTree
& $\mathbf{0.692}_{\pm 0.104}$
& $0.698_{\pm 0.133}$
&$\underline{0.766} _{\pm 0.120}$
& $0.764_{\pm 0.103}$ \\

\rowcolor{mtgreen!4} 
\multirow{-4}{*}{\rotatebox{90}{\scriptsize \shortstack{Same tree\\param. space}}}
& MotherTree-Tune
& \underline{0.680}$_{\pm 0.101}$
& $0.757_{\pm 0.166}$
& $\mathbf{0.772}_{\pm 0.125}$
& $\mathbf{0.804}_{\pm 0.117}$ \\
\bottomrule%
\end{tabular}
\label{tab:200}
\vspace{-0.5em}
\end{table}

\paragraph{MotherTree learners are competitive across tasks and sample budgets.}
Figure \ref{fig:auc_context_size} illustrates the performance of all methods for different training set budgets. For the smallest budget, $B=100$, MotherTree and MotherTree-Tune achieve the best results on both TabArena and OpenML-CC18. This demonstrates that meta-learning can benefit tree fitting both directly (MotherTree) and as an initialization for subsequent optimization (MotherTree-Tune). CART is a consistently strong baseline (Figure~\ref{fig:ccd_context_size}), outperforming both optimal and gradient-based tree baselines at larger training set sizes. Only MotherTree-Tune remains competitive in this regime. GradTree is the strongest gradient-based baseline, and benefits substantially from distillation, but is consistently outperformed by MotherTree-Tune (we expand on this below). Appendix Table \ref{tab:runtime} shows the average fitting and inference time for evaluation tasks on both GPU and CPU. The time for MotherTree-Tune increases moderately with training set budget, remaining within a few seconds per task on both GPU and CPU.

\paragraph{Meta-learning with MotherTree outperforms from-scratch gradient-based learning.}
A close comparison with GradTree reveals the utility of meta-learning: Base MotherTree optimizes the same parameterization as GradTree using a single forward pass through the meta-learned generator, rather than from-scratch stochastic gradient descent.
%
%
From Figure \ref{fig:auc_context_size}, MotherTree outperforms GradTree up to a budget of $B=800$ samples on both benchmarks, with the methods performing comparably on OpenML-CC18 at 1000 samples. 
Table~\ref{tab:200} separates results for binary and multiclass classification tasks at $B=200$ samples.  In binary tasks, MotherTree improves the mean AUC in binary tasks over GradTree from 0.635 to 0.692 on TabArena and from 0.725 to 0.766 on OpenML-CC18-Small. MotherTree also outperform other gradient-based baselines at this budget, improving over RADDT by 0.065 (0.056) AUC and over DNDT by 0.097 (0.116) points on TabArena and (OpenML-CC18), respectively. 
For multiclass tasks, MotherTree improves over GradTree on OpenML-CC18 but underperforms on TabArena, indicating less consistent gains on this benchmark. RADDT is best among gradient-based methods at multiclass tasks with $B=200$, but \emph{returns an oblique (not axis-aligned) tree}.  The gap between MotherTree and GradTree decreases as the training data size grows, suggesting that the meta-learned prior is most beneficial when task-specific observations are scarce; fitting a tree from scratch using gradient descent becomes increasingly competitive with more data.

\paragraph{Meta-learning and task-specific optimization are complementary.}
MotherTree-Tune continues to optimize the parameters generated by MotherTree using the same procedure and hyperparameters as GradTree. Notably, MotherTree-Tune consistently outperforms GradTree and GradTree$_{\leftarrow \mathrm{TabPFN}}$ across training set budgets (Figures~\ref{fig:auc_context_size} \& \ref{fig:ccd_context_size}), benchmarks, and label types (Table~\ref{tab:200}), demonstrating the value of MotherTree as an initializer for gradient-based learning. 
Task-specific tuning is often preferable over base MotherTree: At \(B=200\), on binary TabArena tasks, MotherTree-Tune decreases AUC slightly from 0.692 to 0.680, but improves it from 0.766 to 0.772 on OpenML-CC18. The benefit is more pronounced for multiclass tasks, where tuning improves MotherTree from 0.698 to 0.757 on TabArena and from 0.764 to 0.804 on OpenML-CC18, outperforming the more flexible parameterization of RADDT. 
%
%
These results suggest that \emph{meta-learning and task-specific
optimization are complementary:} MotherTree provides a task-dependent initialization that i) consistently outperforms from-scratch optimization, and ii) can be improved further by continued tuning.

\begin{figure}[t]
    \centering
    \begin{subfigure}[t]{0.49\linewidth}
        \centering
        \includegraphics[width=.95\linewidth,trim={0.3cm 0.5cm 0.4cm 0cm},clip]{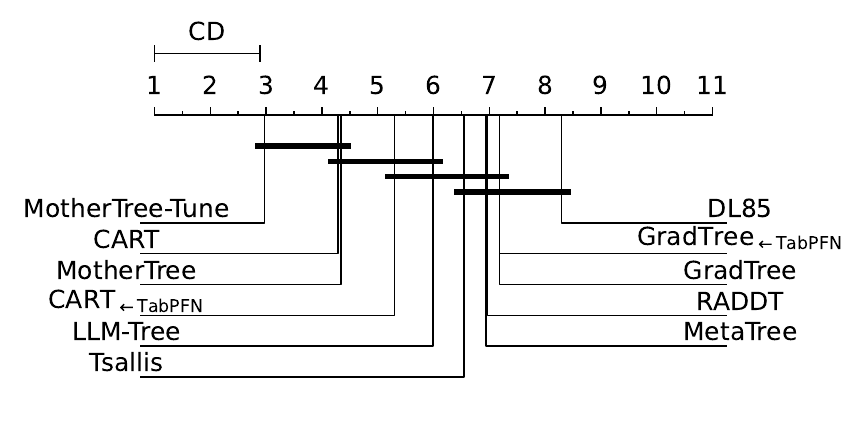} 
        \caption{Training set budget $B = 100$}
        \label{fig:joint-context-100}
    \end{subfigure}
    \hfill
    \begin{subfigure}[t]{0.49\linewidth}
        \centering
        \includegraphics[width=.95\linewidth,trim={0.4cm 0.5cm 0.4cm 0cm},clip]{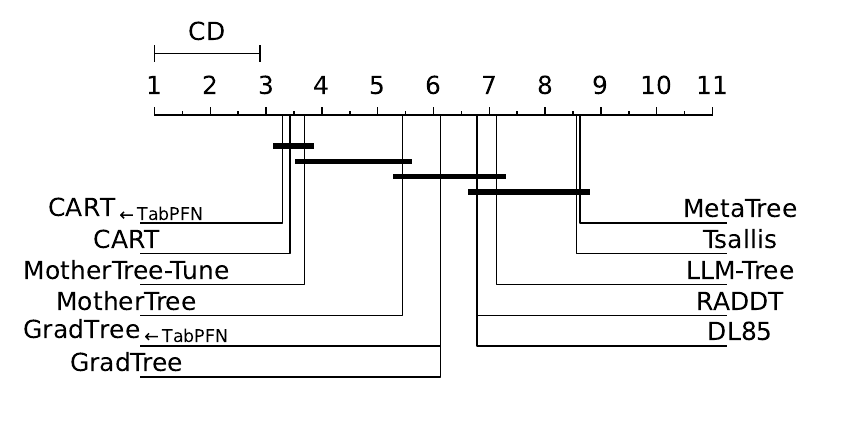} 
        \caption{Training set budget $B = 1000$}
        \label{fig:joint-context-1000}
    \end{subfigure}
    \caption{Critical Difference Diagram across both benchmarks at different training set sample budgets, $B$. More comparisons across other context budgets are reported in Appendix~\ref{app:cd}}
    \label{fig:ccd_context_size}
\end{figure}

\paragraph{Parameter meta-learning is an effective mechanism for
transferring prior knowledge to trees.}
Finally, we compare different mechanisms for transferring prior knowledge to decision trees. These methods differ substantially in how the prior is represented and transferred. Our distillation baselines transfer the predictive knowledge of TabPFN2.6 separately on each downstream task through teacher-generated labels. MetaTree meta-learns from pre-fit reference trees, thereby transferring an explicit prior over tree structures, whereas LLM-Tree constructs trees from the semantic information associated with table columns. In contrast, MotherTree meta-learns the tree-generation procedure from synthetic tasks and conditions the resulting structural prior directly on the observed context.

Figure~\ref{fig:ccd_context_size} reveals a clear dependence on the amount of
task-specific data. At a training budget of $B=100$, MotherTree-Tune
achieves the best average rank across the joint benchmark, while
MotherTree also ranks among the strongest methods. In contrast,
distillation provides limited benefit in this regime: both
CART$_{\leftarrow\mathrm{TabPFN}}$ and
GradTree$_{\leftarrow\mathrm{TabPFN}}$ rank below their meta-learned
counterparts. As the context budget increases to $B=1000$, the gap
narrows substantially, and CART$_{\leftarrow\mathrm{TabPFN}}$ becomes
the highest-ranked method. This behavior is consistent with their respective transfer mechanisms: MetaTree learns from shallow target trees, while LLM-Tree primarily relies on table semantics rather than increasing amounts of labeled context.

\paragraph{Ablation studies: Tree-aware decoder.}
\label{sec:ablation}
\begin{figure}[t]
    \centering
    \includegraphics[width=1.0\linewidth]{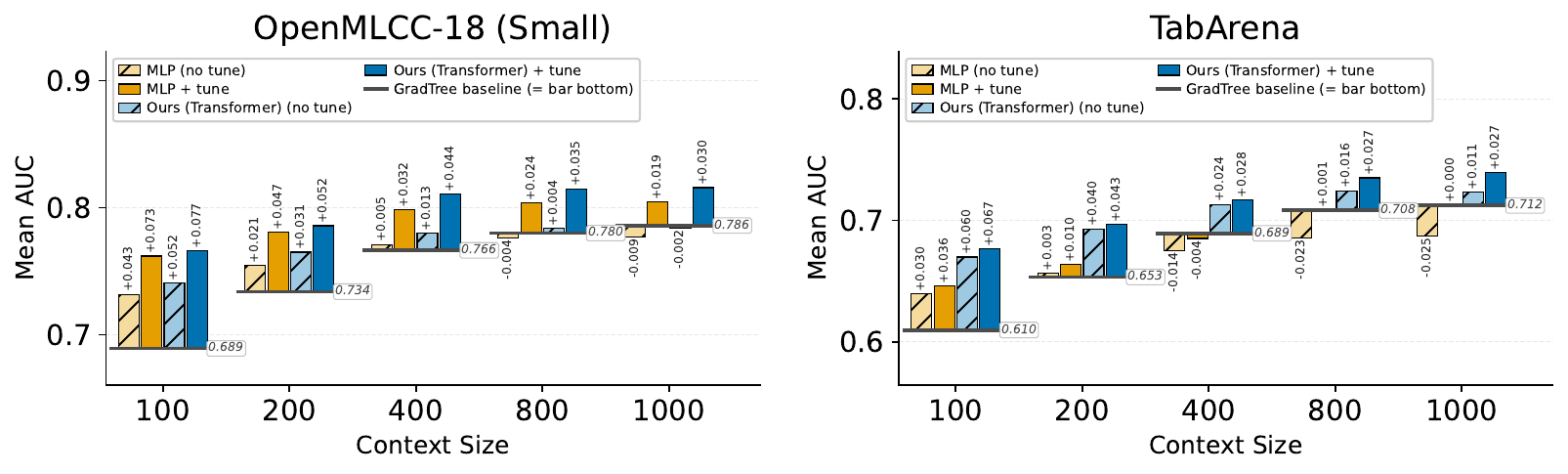}
    \vspace{-15pt}
\caption{
Decoder ablation on OpenML-CC18-Small and TabArena.
Bars show the change in mean ROC-AUC relative to GradTree at each
context size. We compare the MotherNet-style MLP decoder with our
tree-structured Transformer decoder, both before and after task-specific
finetuning. 
}
\label{fig:decoder_ablation}
\vspace{-10pt}
\end{figure}
We ablate the tree-structured Transformer decoder by replacing it with the MLP decoder used in MotherNet, modifying only its output dimension to match the parameters of the downstream tree. Figure~\ref{fig:decoder_ablation} shows that the Transformer decoder consistently provides a stronger initialization than the MLP decoder. The difference is particularly clear after task-specific tuning: Transformer-decoded trees outperform GradTree across all context sizes on both benchmarks, whereas the advantage of the MLP initialization diminishes as more training data are added. These results suggest that explicitly modeling dependencies among tree nodes provides a more effective inductive bias for generating and initializing decision trees. In Appendix~\ref{app:TABICL encoder}, we compare using the MotherNet and TabICLv2 encoders and find no benefit from the latter.

\subsection{Introspection on synthetic parity (XOR) tasks}
\label{evaluation on xor}
Real-world benchmarks do not reveal whether performance gains arise from recovering more appropriate tree structures, since the optimal splits are unknown. To test whether meta-learned tree induction improves \emph{feature selection}, we construct synthetic binary classification tasks using parity (XOR) functions corrupted with individually-informative distractors.  Every distractor is informative about the label, whereas every parity bit is informative only through its joint interaction. Thus, greedy splitting is non-optimal by design.
Given $k$ bits $\mathbf{b}$, a noiseless label is defined as $y = \bigoplus_{i=1}^{k} b_i$, where $b_i$ is drawn uniformly at random. We then form the input $\mathbf{x}$ by appending $m$ distractor features to $\mathbf{b}$, each of which is a noisy copy of the label: distractor $j$ equals $y$ with probability $q$ and is flipped with $1-q$ probability. We compare tree learners to the ``ceiling'' achievable by a distractor-only model.
For details about the Parity tasks, see Appendix \ref{XOR subsection}.

\paragraph{Higher-order parity is challenging for both gradient-based and greedy tree optimization.}
Figure~\ref{fig:xor}a first shows that the difficulty of the task is not caused solely by distractor features. For larger parity orders ($k=5$), gradient-based tree methods already exhibit a gap to the DL85 oracle and greedy splitting (CART) when no distractors are present ($m=0$). This reflects the difficulty of \emph{optimizing} the differentiable tree parameterization to fit higher-order interactions, rather than an intrinsic limitation of the tree structure itself or sample inefficiency (since the label is noiseless). We therefore focus on the additional degradation caused by distractors, rather than interpreting the absolute gap to the oracle as feature-selection error. 

\paragraph{Meta-learned initialization improves robustness under a tight depth budget.}
The effect of distractors becomes pronounced once
$\lambda \coloneqq m-(D-k)>0$. In this regime, the depth budget is exhausted, so
using a distractor consumes a split that is required to fully represent the
parity function. Most tree learners deteriorate
sharply as $\lambda$ increases. In contrast, MotherTree-Tune degrades
more gradually and remains substantially more robust across the
difficult configurations, indicating that the meta-learned
initialization helps task-specific optimization resist locally
predictive but structurally sub-optimal features. Figure~\ref{fig:xor_q_facet} isolates this effect across different distractor
strengths $q$ by normalizing performance relative to the
distractor-only ``ceiling'' and the DL85 oracle. Across values of $q$,
MotherTree-Tune retains a larger fraction of the attainable improvement
once the depth constraint requires distractors to be rejected. These
results suggest that the main benefit of meta-learning in this setting
is not to solve the parity problem outright, but to provide a stronger
initialization that makes gradient-based tree optimization more robust
to misleading shortcut features.

\begin{figure}[t]
    \centering
    \hfill
    \begin{minipage}[t]{0.49\linewidth}
        \centering
        \includegraphics[width=1\linewidth,trim={0.25cm 0.2cm 0.3cm 0.25cm},clip]
        {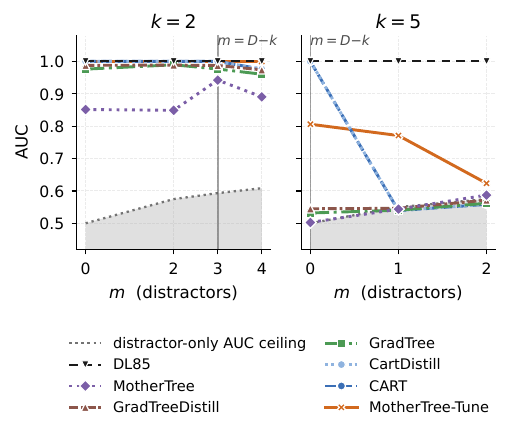}
        \subcaption{AUC against the number of distractor features $m$ for varying parity orders $k$ (number of bits). }
        \label{fig:xor_panels}
    \end{minipage}
    \hfill
    \begin{minipage}[t]{0.49\linewidth}
        \centering
        \includegraphics[width=1\linewidth,trim={0.23cm 0.2cm 0.25cm 0.25cm},clip]
        {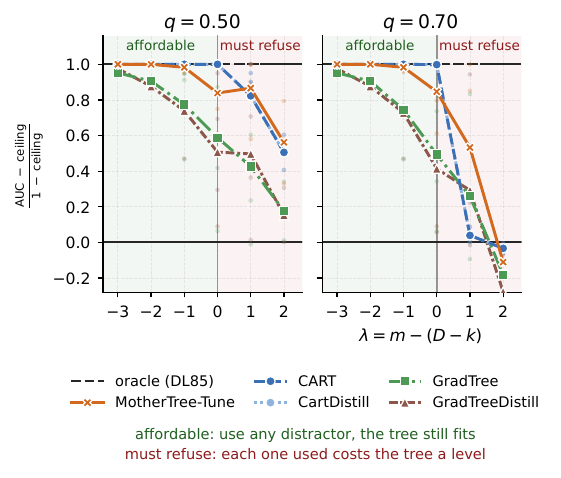}
        \subcaption{Realized improvement vs optimal distractor-only model under different distractor-label correlation, $q$.}
        \label{fig:xor_q_facet}
    \end{minipage}
    \caption{%
        Learning parity with distractors at fixed maximum tree depth, $D = 5$. Right:
        $\lambda = m - (D-k)$ is the number of distractors that must be ignored to learn an optimal tree. Green ($\lambda \le 0$): the depth budget has slack and splitting on a distractor does not rule out an optimal model. 
Red ($\lambda > 0$): the budget is exhausted, so each distractor used takes a
level away from the parity tree; at least $\lambda$ of them must be refused. Results across all settings could be seen in Appendix \ref{app:xor_full_fig}. }
\label{fig:xor}
\end{figure}

\section{Discussion}
\begin{wrapfigure}[15]{r}{0.5\textwidth}
    \centering
    \vspace{-12pt}
    \includegraphics[
        width=\linewidth,trim={0 0 0 0.2cm},clip
    ]{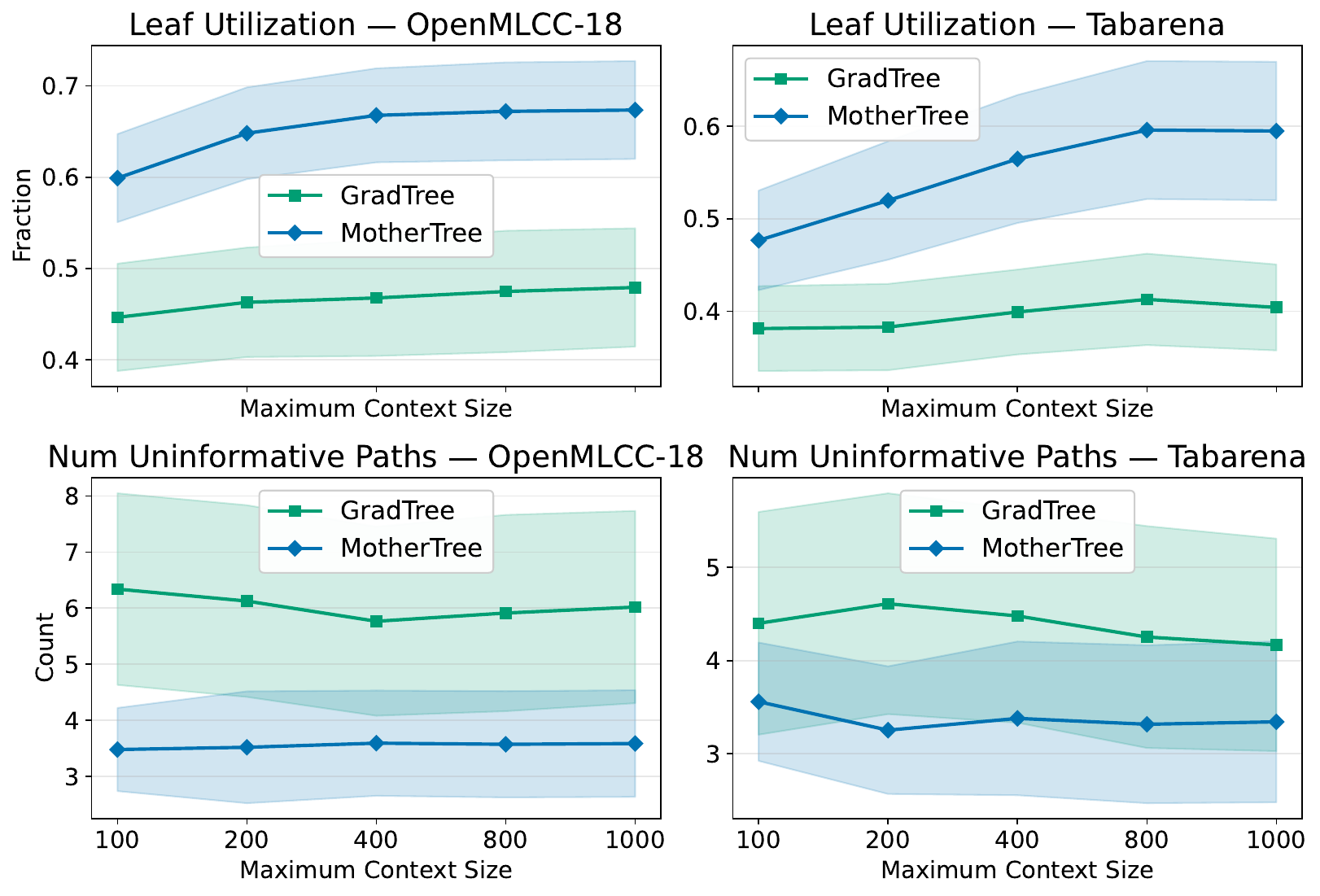}
    \captionsetup{skip=3pt}
    \caption{Number of uninformative paths (lower is better) and leaf utilization (higher is better). }
    \label{fig:tree structure}
    \rule{0pt}{2pt}
\end{wrapfigure} 
\paragraph{When does amortized tree learning help?}
In our experiments, MotherTree provides the largest gains when task-specific training sets are
limited in size; the advantage over fitting the same tree
parameterization from scratch decreases as more data become
available. This is consistent with the role of meta-learning as a
task-conditioned prior: it provides a stronger inductive bias when
observations are scarce, whereas task-specific optimization becomes
increasingly competitive with more data. Currently, the gains are strongest on
binary classification tasks, while the weaker multiclass performance suggests that direct
generation also inherits limitations of the underlying differentiable
tree parameterization. Importantly, the structural statistics from Figure \ref{fig:tree structure} show
that this prior affects the tree itself: MotherTree uses a larger
fraction of leaves and produces fewer contradictory root-to-leaf paths
than GradTree under the same depth and parameterization, which shows that the improvement from meta-learning also leads to a better tree structure. 

\paragraph{Limitations.} MotherTree currently generates complete binary trees with a fixed
topology and depth. This simplifies amortized generation, since the
number and correspondence of tree parameters are known in advance, but
prevents the model from adapting tree size or topology to individual
tasks. As a result, some generated trees may allocate capacity to
unnecessary nodes, while more complex tasks may require a larger tree
than the predefined structure allows. Extending MotherTree to generate
variable-size or pruned tree structures is an important direction for
future work.

Our study demonstrates the value of meta-learning induction of \emph{single-tree} classifiers. However, tree \emph{ensembles} universally achieve stronger predictive performance by combining predictions of multiple complementary trees. Future research extending amortized tree generation to ensembles must tackle questions about how to encourage diversity and coordinate dependency structures across generated trees, to minimize redundancy and improve coverage of the target task. 

\subsubsection*{Acknowledgments}
We thank Uri Shalit, Jesse Krijthe, and the Healthy AI Lab for fruitful discussions in the early stages of this work, and Valter Sch\"{u}tz for valuable feedback on the manuscript.
The work was supported in part by the Wallenberg AI, Autonomous Systems and Software Program
(WASP) funded by the Knut and Alice Wallenberg Foundation. The computations and data handling were enabled by resources provided by the National Academic Infrastructure for Supercomputing in Sweden (NAISS), partially funded by the Swedish Research Council through grant agreement no. 2022-06725.

\bibliography{iclr2027_conference}
\bibliographystyle{iclr2027_conference}

\clearpage
\appendix

\section{Appendix}

\subsection{Architecture details}
\label{app:architecture}
\paragraph{Encoder} We keep the same encoder from MotherNet ~\citep{muller2025mothernet}, where each row of the table is embedded through a linear layer and one-hot encoding for class is used before sending to the transformer blocks. 
Unlike TabPFN~\citep{hollmann2025accurate} and TabICL~\citep{qu2026tabiclv},
which rely on dual attention between context and features, MotherNet encoder operates exclusively on the context data and
does not require test points to participate in attention.

Concretely, given a context set with $n$ samples and $d$ features,
the encoder first applies a linear projection to map each row and corresponding label
into a $d'$-dimensional feature embedding. Addition is used to combine feature and label information, followed by $12$ 
inter-sample attention layers to capture the mutual information between rows.
This yields a row-wise representation for every context sample.
To aggregate the $n$ row representations into a single data set embedding,
the encoder performs class-wise mean pooling:
it averages the row representations belonging to the same class,
and concatenates the resulting $k$ class means into
$\operatorname{E}_{\phi_{\mathrm{enc}}}({\mathcal{D}})$.
The resulting $\operatorname{E}_{\phi_{\mathrm{enc}}}({\mathcal{D}})$ is fixed-dimensional, which the tree decoder subsequently
maps to the parameters of an axis-aligned decision tree.

\paragraph{Decoder}
For node representations
$H^{(i)}\in\mathbb{R}^{N\times d_n}$,  ancestral self-attention is
computed as
\[
\mathbf{Q}=H^{(i)}\mathbf{W}_Q,\quad
\mathbf{K}= H^{(i)}\mathbf{W}_K,\quad
\mathbf{V}=H^{(i)}\mathbf{W}_V,
\]
\[
\mathrm{Attn}(\mathbf{Q},\mathbf{K},\mathbf{V})
=
\mathrm{softmax}
\left(
\frac{\mathbf{Q}\mathbf{K}^{\top}}{\sqrt{h}}+\mathbf{M}
\right)\mathbf{V}.
\]
After $L$ Transformer layers, the resulting node representations are
projected to the corresponding split and leaf parameters, yielding
$\theta=(I,T,L)$. We set $d_n=32$ and use $L=8$ Transformer layers,
resulting in a decoder with $23.6$M parameters. These hyperparameters
were chosen to keep the decoder capacity comparable to that of the
original MotherNet decoder ($17.7$M parameters) for tree generation. 

\subsection{Comparison with TabICL encoder} 
\label{app:TABICL encoder}

\begin{wraptable}{r}{0.56\textwidth}
  \centering
\caption{Mean AUC for different encoders on TabArena.}
  \label{tab:backbone-ablation}

  \footnotesize
  \setlength{\tabcolsep}{3.5pt}
  \begin{tabular}{@{}ccc@{}}
    \toprule
    Context & TabICL Encoder & MotherNet Encoder \\
    \midrule
     100 & 0.632 & 0.639 \\
     200 & 0.652 & 0.657 \\
     400 & 0.671 & 0.675 \\
     800 & 0.679 & 0.685 \\
    1000 & 0.681 & 0.687 \\
    \bottomrule
  \end{tabular}
\end{wraptable}

We also ablate the MotherTree encoder by replacing the MotherNet encoder
with the encoder from TabICLv2~\citep{qu2026tabiclv}. We provide only the
training samples to the encoder and use the same aggregation procedure to
construct the data set embedding, as described in Section~\ref{sec:encoder}.

As shown in Table~\ref{tab:backbone-ablation}, replacing the MotherNet
encoder with TabICLv2 does not improve performance across the evaluated
context sizes. In fact, the MotherNet encoder performs slightly better in
all settings. These results suggest that the performance of MotherTree is
not driven by the use of a stronger tabular prediction backbone, and that
increased encoder sophistication does not necessarily translate into better
representations for tree generation.

\subsection{Training Details}
\label{appendix:training-details}

MotherTree is pre-trained on both TabICL~\citep{qu2025tabicl} and TabICLv2 prior~\citep{qu2026tabiclv}. The main reason is that, for TabICL prior, it provides diverse activation functions and tree-based SCMs to model hierarchical interactions between variables, which helps the model to grasp the basic complicated interaction between variables. However, due to the instability of some activation functions like \texttt{Exponential function}, it sometimes introduces the overflow error, which makes the data set quality less effective. While in TabICLv2, a more sophisticated prior generation process is utilized.  \texttt{Random Cauchy graph} creates richer connectivity of graph, but the generation process is much more expensive than the previous one. Hence, during the first stage of training,  our model is trained on the fly with generation of 128 batches of data for each step. A LinearLR scheduler is used to gradually increase the learning rate. For the second stage of training, we generate $120k$ files on the disk. For each file, it contains $4$ groups of graph configuration. For each configuration, we randomly generate 16 batches.  1000 steps of warm up is firstly applied and then we switch to cosine annealing scheduler by setting minimum learning rate to $1\times10^{-8}$.  We optimize the MotherTree using the AdamW optimizer~\citep{loshchilov2017decoupled} and the momentum coefficients are set to \(\beta_1=0.9 \) and \(\beta_2=0.999\). 

\begin{wraptable}{r}{0.36\textwidth}
  \centering
  \caption{Impact of Stage 1 pre-training($400k$ steps) on validation sets from OpenML-CC18~\citep{muller2025mothernet} after $125k$ steps.}
  \label{tab:prior-ablation}

  \footnotesize
  \setlength{\tabcolsep}{3.5pt}
\begin{tabular}{@{}lc@{}}
  \toprule
  Training Scheme & Mean AUC \\
  \midrule
  Stage 2 only & 0.685 \\
  Stage 1 + Stage 2 & 0.752 \\
  \bottomrule
\end{tabular}
\end{wraptable}

As shown in Table~\ref{tab:prior-ablation}, incorporating Stage 1 pre-training with the TabICL prior substantially improves the validation AUC compared with training only on the TabICLv2 prior. This design is also considerably more training-efficient, since generating TabICLv2 prior samples is substantially more expensive and would otherwise lead to a much longer overall training time. For training hours, Stage 1 took 68 hours and stage 2 took 18 hours.

\subsection{Evaluation Details}
\label{app:evaluation}

\subsubsection{Duplicate Tasks}
Between TabArena and OpenMLCC-18, there are four overlapping data sets, which are \texttt{blood-transfusion-service-center}, \texttt{diabetes}, \texttt{credit-g} and \texttt{qsar-biodeg}. For the first two data sets, they are exactly the same data sets but orders are different. For \texttt{credit-g}, the original ordinal encoding is reversed, which doesn't affect the results. \texttt{qsar-biodeg}
is the one case where the two copies are not the same size(TabArena has one less row) and TabArena changes some of numeric-ordinal features to categorical features.

\subsubsection{Distillation Baseline}
\label{app:distillation}
Following MotherNet~\citep{muller2025mothernet}, we also incorporate distillation baselines for comparison. Distillation provides a natural way to transfer knowledge from a powerful teacher model (e.g., TabPFN) to a lightweight student model. However, unlike the original distillation framework~\citep{hinton2015distilling}, which relies on gradient descent with soft labels, the non-differentiability of CART prevents direct application of this approach. We therefore adopt a simplified strategy: hard labels sampled from the predicted probability of the teacher model are mixed with the original training data, and the student model is trained on this augmented data set.

For consistency and simplicity, we apply this strategy to both CART and GradTree. Since both MotherTree and these distilled baselines leverage knowledge from PFN(prior fitted network), any performance gap between them can be attributed primarily to architectural differences rather than the source of knowledge. We sample the label for each training data according to the predictions from the teacher model. Then, append the sample with new label to the training set, which doubles the original size of the training set. 

As for the teacher model, we experiment with both \texttt{TabPFN2.6} and \texttt{TabPFN-3}~\citep{grinsztajn2026tabpfn}. We report results from \texttt{TabPFN2.6} in the main paper, as we observe no significant performance difference between the two teachers, while \texttt{TabPFN2.6} is substantially more time-efficient.

\subsubsection{Baselines \& Hyperparameter tuning}
\label{Hyperparameter tuning}
We compare our method against several baseline models using their publicly available implementations. 
The corresponding repositories are listed below:
\begin{itemize}
    \item DNDT: \url{https://github.com/wOOL/DNDT}
    \item RADDT: \url{https://github.com/YankaiGroup/RADDT}
    \item LLM-Tree: \url{https://github.com/ml-lab-htw/llm-trees}
    \item GradTree (we re-implemented the following repository in \texttt{Pytorch}): \url{https://github.com/s-marton/GradTree}
    \item DL85: \url{https://github.com/aia-uclouvain/pydl8.5}
    \item MetaTree: \url{https://github.com/EvanZhuang/MetaTree}
\end{itemize}

\begin{longtable}{@{} l p{11cm} @{}}
\caption{Hyperparameter search spaces and fixed settings for each model.}
\label{tab:hyperparams} \\

\toprule
\textbf{Model} & \textbf{Hyperparameters} \\
\midrule
\endfirsthead

\multicolumn{2}{c}%
{{\tablename\ \thetable{} -- continued from previous page}} \\
\toprule
\textbf{Model} & \textbf{Hyperparameters} \\
\midrule
\endhead

\midrule
\multicolumn{2}{r}{{Continued on next page}} \\
\endfoot

\bottomrule
\endlastfoot

CART &
criterion $\in\{$gini, entropy$\}$;
min\_samples\_split $\in\{2,5,10\}$;
min\_samples\_leaf $\in\{1,2,5\}$.
\\[4pt]

GradTree &
feature selector LR $\in\{0.001,0.005,0.010,0.050\}$;
threshold LR $\in\{0.010,0.050,0.100,0.250\}$;
leaf node LR $\in\{0.001,0.005,0.010,0.100\}$;
epochs $=1000$;
patience $=50$.
\\[4pt]

DL85 &
time limit $=600$s;
max depth $=5$;
quantile binning $=5$.
\\[4pt]

$(\alpha^*,\beta^*)$ Tsallis &
$a \in \mathrm{linspace}(1,10,10)$;
$b \in \mathrm{linspace}(1,10,10)$.
\\[4pt]

DNDT &
learning rate $=0.05$;
batch size $=256$;
epochs $=300$;
patience $=20$;
max features $=10$;
Gumbel temperature $1.0\rightarrow0.1$ over 200 epochs.
\\[4pt]

RADDT &
tree depth $=5$;
epochs $=300$;
num scale $=5$;
start num $=1$;
warm start: CART (1 initialization).
\\[4pt]

MetaTree &
tree depth $=5$;
max features $=10$;
max samples  $=256$.
\\[4pt]

LLM-Tree &
DeepSeek-V4-Flash (thinking disabled).
\\

\end{longtable}

\subsubsection{Sample Method}
\label{app:sample-method}

\begin{algorithm}[ht]
\caption{Deterministic Valid Train--Evaluation Split}
\label{alg:valid_split}
\begin{algorithmic}[1]
\Require Data set $(X,y)$, sample size $n$, training size $k$,
         seed $s$, minimum class count $m$
\Ensure Ordered split $(X',y')$

\State $n' \gets \min(n, |X|)$
\State Remove missing samples and classes with fewer than two samples

\If{$n < |X|$}
    \State $(\mathcal{C},\mathbf{c}) \gets
        \Call{UniqueWithCounts}{y}$
    \State Initialize $\mathbf{a}\gets\mathbf{0}$
    
    \Comment{Minimum allocation}
    \For{$c_i$ in ascending order of $c_i$}
        \State $a_i\gets
        \min\{m,c_i,n-\sum_j a_j\}$
    \EndFor

    \Comment{Proportional allocation of remaining slots}
    \State $r\gets n-\sum_i a_i$
    \State $q_i\gets
        r\frac{c_i-a_i}{\sum_j(c_j-a_j)}$
    \State $e_i\gets\lfloor q_i\rfloor$
    \State $\ell\gets r-\sum_i e_i$
    \State Let $\mathcal{I}$ be the $\ell$ classes with largest
           $(q_i-e_i)$ among those with $e_i<c_i-a_i$
    \State $e_i\gets e_i+1$ for $i\in\mathcal{I}$
    \State $a_i\gets a_i+e_i$

    \State Sample $a_i$ instances from each class $c_i$
    \State $(X_s,y_s)\gets$ sampled data set
\Else
    \State $(X_s,y_s)\gets(X,y)$
\EndIf

\State $(X_{\mathrm{tr}},X_{\mathrm{te}},
        y_{\mathrm{tr}},y_{\mathrm{te}})
        \gets
        \Call{StratifiedSplit}{X_s,y_s,k,s}$

\State Ensure both splits contain all classes by exchanging samples if necessary

\State \Return
$(X_{\mathrm{tr}},X_{\mathrm{te}},
        y_{\mathrm{tr}},y_{\mathrm{te}})$
\end{algorithmic}
\end{algorithm}

\subsubsection{Preprocessing pipeline and missing value handling}
\label{app:preprocess}

In the original implementation of GradTree, categorical features are processed according to their cardinality. Features with fewer than 10 unique categories are transformed using a \textbf{One-Hot Encoder}, whereas categorical features with higher cardinality are transformed using a \textbf{Leave-One-Out (LOO) Encoder}. Based on our empirical experiments, however, we use the \textbf{Leave-One-Out Encoder for all categorical features}, regardless of their cardinality. This modification provides a unified representation for categorical variables while avoiding the potentially substantial increase in dimensionality introduced by one-hot encoding. Missing values are handled after categorical encoding using median imputation. Specifically, the median of each transformed feature is computed from the training data and used to replace remaining missing values. Following categorical encoding and missing-value imputation, we retain the \textbf{Quantile Transformer} used in the original GradTree implementation and apply it to all transformed features, including both the original numerical features and the LOO-encoded categorical features.

\subsection{Results Acorss All Lenghts}
\label{appendix: overall results across all lengths}

\begin{table}[ht]
\centering
\footnotesize
\caption{Overall mean AUC across benchmarks — model comparison per budget.}
\label{tab:overall_comparison}
\setlength{\tabcolsep}{3pt}  
\begin{tabular}{lllccccc}
\toprule
Benchmark & & Model & 100 & 200 & 400 & 800 & 1000 \\
\midrule
&& CART & 0.668$_{\pm0.121}$ & 0.693$_{\pm0.128}$ & \textbf{0.720}$_{\pm0.131}$ & \underline{0.743}$_{\pm0.130}$ & \underline{0.751}$_{\pm0.126}$ \\
&& CART$_{\leftarrow \mathrm{TabPFN}}$  & 0.655$_{\pm0.114}$ & 0.684$_{\pm0.124}$ & 0.715$_{\pm0.132}$ & \textbf{0.751}$_{\pm0.128}$ & \textbf{0.756}$_{\pm0.126}$ \\
&& $(\alpha^*, \beta^*)$ Tsallis & 0.645$_{\pm0.123}$ & 0.650$_{\pm0.133}$ & 0.655$_{\pm0.141}$ & 0.654$_{\pm0.141}$ & 0.654$_{\pm0.141}$ \\
&& DNDT & 0.629$_{\pm0.089}$ & 0.647$_{\pm0.094}$ & 0.659$_{\pm0.096}$ & 0.665$_{\pm0.094}$ & 0.662$_{\pm0.098}$ \\
&& RADDT & 0.642$_{\pm0.116}$ & 0.651$_{\pm0.121}$ & 0.670$_{\pm0.125}$ & 0.686$_{\pm0.132}$ & 0.688$_{\pm0.135}$ \\
&& DL85 & 0.611$_{\pm0.089}$ & 0.629$_{\pm0.111}$ & 0.659$_{\pm0.124}$ & 0.708$_{\pm0.126}$ & 0.714$_{\pm0.127}$ \\
&& LLM-Tree & 0.630$_{\pm0.090}$ & 0.644$_{\pm0.094}$ & 0.662$_{\pm0.096}$ & 0.673$_{\pm0.096}$ & 0.679$_{\pm0.097}$ \\
&& MetaTree & 0.633$_{\pm0.100}$ & 0.646$_{\pm0.102}$ & 0.647$_{\pm0.106}$ & 0.639$_{\pm0.104}$ & 0.643$_{\pm0.105}$ \\
\cmidrule(lr){2-8} 
\rowcolor{mtgreen!4} 
&&  GradTree & 0.610$_{\pm0.083}$ & 0.653$_{\pm0.101}$ & 0.689$_{\pm0.117}$ & 0.708$_{\pm0.111}$ & 0.712$_{\pm0.116}$ \\
\rowcolor{mtgreen!4} 
&&  GradTree$_{\leftarrow \mathrm{TabPFN}}$ & 0.641$_{\pm0.101}$ & 0.667$_{\pm0.108}$ & 0.701$_{\pm0.119}$ & 0.726$_{\pm0.116}$ & 0.722$_{\pm0.119}$ \\
\rowcolor{mtgreen!4} 
&& MotherTree & \textbf{0.679}$_{\pm0.107}$ & \textbf{0.696}$_{\pm0.107}$ & 0.705$_{\pm0.114}$ & 0.715$_{\pm0.107}$ & 0.716$_{\pm0.110}$  \\
\rowcolor{mtgreen!4} 
 \multirow{-12}{*}{\rotatebox{90}{TabArena}} & \multirow{-4}{*}{\rotatebox{90}{\scriptsize \shortstack{Same tree\\param. space}}} &   MotherTree-Tune & 0.676$_{\pm0.110}$ & 0.696$_{\pm0.119}$ & \underline{0.717}$_{\pm0.124}$ & 0.735$_{\pm0.117}$ & 0.739$_{\pm0.120}$ \\
\midrule
 & & CART & 0.734$_{\pm0.120}$ & \underline{0.777}$_{\pm0.117}$ & \underline{0.805}$_{\pm0.120}$ & 0.811$_{\pm0.123}$ & 0.813$_{\pm0.124}$ \\
&& CART$_{\leftarrow \mathrm{TabPFN}}$ & 0.731$_{\pm0.111}$ & 0.768$_{\pm0.117}$ & 0.802$_{\pm0.117}$ & \underline{0.813}$_{\pm0.117}$ & \underline{0.815}$_{\pm0.118}$ \\
&& $(\alpha^*, \beta^*)$ Tsallis & 0.717$_{\pm0.124}$ & 0.723$_{\pm0.146}$ & 0.724$_{\pm0.163}$ & 0.724$_{\pm0.167}$ & 0.725$_{\pm0.168}$ \\
&& DNDT & 0.585$_{\pm0.066}$ & 0.595$_{\pm0.074}$ & 0.607$_{\pm0.076}$ & 0.616$_{\pm0.082}$ & 0.615$_{\pm0.085}$ \\
&& RADDT & 0.715$_{\pm0.132}$ & 0.748$_{\pm0.136}$ & 0.768$_{\pm0.145}$ & 0.776$_{\pm0.146}$ & 0.777$_{\pm0.149}$ \\
&& DL85 & 0.670$_{\pm0.113}$ & 0.713$_{\pm0.121}$ & 0.747$_{\pm0.136}$ & 0.761$_{\pm0.139}$ & 0.762$_{\pm0.140}$ \\
&& LLM-Tree & 0.684$_{\pm0.135}$ & 0.699$_{\pm0.136}$ & 0.715$_{\pm0.135}$ & 0.718$_{\pm0.137}$ & 0.718$_{\pm0.137}$ \\
&& MetaTree & 0.691$_{\pm0.127}$ & 0.716$_{\pm0.124}$ & 0.726$_{\pm0.123}$ & 0.721$_{\pm0.121}$ & 0.723$_{\pm0.121}$ \\
\cmidrule(lr){2-8} 
\rowcolor{mtgreen!4} 
&&  GradTree & 0.689$_{\pm0.123}$ & 0.734$_{\pm0.124}$ & 0.766$_{\pm0.122}$ & 0.780$_{\pm0.122}$ & 0.786$_{\pm0.122}$ \\
\rowcolor{mtgreen!4} 
& & GradTree$_{\leftarrow \mathrm{TabPFN}}$ & 0.719$_{\pm0.121}$ & 0.762$_{\pm0.117}$ & 0.784$_{\pm0.121}$ & 0.795$_{\pm0.119}$ & 0.799$_{\pm0.119}$ \\
\rowcolor{mtgreen!4}
\multirow{-12}{*}{\rotatebox{90}{OpenML-CC18\scriptsize -Small}}&& MotherTree &  \underline{0.741}$_{\pm0.111}$ & 0.765$_{\pm0.111}$ & 0.780$_{\pm0.114}$ & 0.784$_{\pm0.114}$ & 0.784$_{\pm0.114}$ \\
\rowcolor{mtgreen!4} 
&\multirow{-4}{*}{\rotatebox{90}{\scriptsize \shortstack{Same tree\\param. space}}} & MotherTree-Tune & \textbf{0.766}$_{\pm0.117}$ & \textbf{0.786}$_{\pm0.121}$ & \textbf{0.811}$_{\pm0.113}$ & \textbf{0.815}$_{\pm0.116}$ & \textbf{0.816}$_{\pm0.117}$ \\
\bottomrule
\label{tab:overall}
\end{tabular}
\end{table}

\begin{table}[ht]
\centering
\footnotesize
\caption{Mean AUC for binary classification tasks on TabArena and OpenML-CC18}
\label{tab:binary_tasks}
\setlength{\tabcolsep}{3pt}  

\begin{tabular}{lllccccc}
\toprule
Benchmark & & Model & 100 &  200 & 400 & 800 & 1000 \\
\midrule
& & CART & 0.647$_{\pm0.101}$ & 0.670$_{\pm0.107}$ & 0.697$_{\pm0.106}$ & 0.722$_{\pm0.108}$ & \underline{0.731} $_{\pm0.104}$ \\
& & CART$_{\leftarrow \mathrm{TabPFN}}$ & 0.634$_{\pm0.092}$ & 0.659$_{\pm0.099}$ & 0.690$_{\pm0.108}$ & \textbf{0.732}$_{\pm0.107}$ & \textbf{0.738}$_{\pm0.104}$ \\
& & $(\alpha^*, \beta^*)$ Tsallis & 0.619$_{\pm0.100}$ & 0.620$_{\pm0.108}$ & 0.624$_{\pm0.118}$ & 0.623$_{\pm0.116}$ & 0.623$_{\pm0.117}$ \\
& & DNDT & 0.586$_{\pm0.071}$ & 0.595$_{\pm0.079}$ & 0.607$_{\pm0.079}$ & 0.614$_{\pm0.088}$ & 0.616$_{\pm0.089}$ \\
& & RADDT & 0.620$_{\pm0.095}$ & 0.627$_{\pm0.100}$ & 0.646$_{\pm0.108}$ & 0.660$_{\pm0.115}$ & 0.662$_{\pm0.114}$ \\
 & & DL85 & 0.594$_{\pm0.073}$ & 0.604$_{\pm0.088}$ & 0.631$_{\pm0.096}$ & 0.685$_{\pm0.103}$ & 0.692$_{\pm0.106}$ \\
& &  MetaTree & 0.614$_{\pm0.076}$ & 0.632$_{\pm0.085}$ & 0.640$_{\pm0.090}$ & 0.631$_{\pm0.087}$ & 0.633$_{\pm0.092}$ \\
& & LLM-Tree & 0.629$_{\pm0.091}$ & 0.641$_{\pm0.092}$ & 0.661$_{\pm0.094}$ & 0.671$_{\pm0.095}$ & 0.679$_{\pm0.096}$ \\
\rowcolor{mtgreen!4} 
& &  GradTree & 0.604$_{\pm0.082}$ & 0.635$_{\pm0.081}$ & 0.674$_{\pm0.101}$ & 0.695$_{\pm0.097}$ & 0.700$_{\pm0.102}$ \\
\rowcolor{mtgreen!4} 
& &  GradTree$_{\leftarrow \mathrm{TabPFN}}$ & 0.629$_{\pm0.093}$ & 0.655$_{\pm0.095}$ & 0.684$_{\pm0.103}$ & 0.715$_{\pm0.102}$ & 0.711$_{\pm0.104}$ \\
\rowcolor{mtgreen!4} 
& & MotherTree & \textbf{0.668}$_{\pm0.105}$ & \textbf{0.692}$_{\pm0.104}$ & \textbf{0.712}$_{\pm0.106}$ & \underline{0.724}$_{\pm0.103}$ & 0.722$_{\pm0.100}$ \\
\rowcolor{mtgreen!4} 
\multirow{-12}{*}{\rotatebox{90}{TabArena}} & \multirow{-4}{*}{\rotatebox{90}{\scriptsize \shortstack{Same tree\\param. space}}}  & MotherTree-Tune & \underline{0.663}$_{\pm0.092}$ & \underline{0.680}$_{\pm0.101}$ & \underline{0.701}$_{\pm0.105}$ & 0.723$_{\pm0.098}$ & 0.728 $_{\pm0.104}$ \\
\cline{1-7}
 & &  CART & 0.709$_{\pm0.129}$ & 0.759$_{\pm0.123}$ & \underline{0.780}$_{\pm0.122}$ & 0.783$_{\pm0.123}$ & 0.783$_{\pm0.123}$ \\
& & CART$_{\leftarrow \mathrm{TabPFN}}$ & 0.712$_{\pm0.120}$ & 0.748$_{\pm0.125}$ & 0.777$_{\pm0.120}$ & \underline{0.788}$_{\pm0.115}$ & \underline{0.788}$_{\pm0.115}$ \\
& & $(\alpha^*, \beta^*)$ Tsallis & 0.684$_{\pm0.133}$ & 0.680$_{\pm0.155}$ & 0.675$_{\pm0.164}$ & 0.672$_{\pm0.161}$ & 0.672$_{\pm0.161}$ \\
& & DNDT & 0.635$_{\pm0.106}$ & 0.650$_{\pm0.108}$ & 0.658$_{\pm0.109}$ & 0.660$_{\pm0.107}$ & 0.660$_{\pm0.109}$ \\
& & RADDT & 0.684$_{\pm0.136}$ & 0.710$_{\pm0.135}$ & 0.726$_{\pm0.144}$ & 0.732$_{\pm0.143}$ & 0.730$_{\pm0.143}$ \\
& & DL85 & 0.657$_{\pm0.128}$ & 0.691$_{\pm0.131}$ & 0.711$_{\pm0.142}$ & 0.721$_{\pm0.141}$ & 0.721$_{\pm0.141}$ \\
& & MetaTree & 0.687$_{\pm0.145}$ & 0.705$_{\pm0.141}$ & 0.711$_{\pm0.141}$ & 0.711$_{\pm0.141}$ & 0.711$_{\pm0.141}$ \\
& & LLM-Tree & 0.659$_{\pm0.147}$ & 0.669$_{\pm0.145}$ & 0.686$_{\pm0.142}$ & 0.687$_{\pm0.143}$ & 0.687$_{\pm0.143}$ \\
\rowcolor{mtgreen!4} 
& & GradTree & 0.705$_{\pm0.136}$ & 0.725$_{\pm0.132}$ & 0.752$_{\pm0.127}$ & 0.765$_{\pm0.128}$ & 0.766$_{\pm0.125}$ \\
\rowcolor{mtgreen!4} 
& & GradTree$_{\leftarrow \mathrm{TabPFN}}$ & 0.708$_{\pm0.137}$ & 0.752$_{\pm0.128}$ & 0.770$_{\pm0.135}$ & 0.780$_{\pm0.127}$ & 0.785$_{\pm0.127}$ \\
\rowcolor{mtgreen!4} 
& & MotherTree & \underline{0.747}$_{\pm0.124}$ & \underline{0.766}$_{\pm0.120}$ & 0.775$_{\pm0.124}$ & 0.776$_{\pm0.124}$ & 0.776$_{\pm0.124}$ \\
\rowcolor{mtgreen!4} 
\multirow{-12}{*}{\rotatebox{90}{OpenML-CC18\scriptsize -Small}} & \multirow{-4}{*}{\rotatebox{90}{\scriptsize \shortstack{Same tree\\param. space}}} & MotherTree-Tune & \textbf{0.759}$_{\pm0.124}$ & \textbf{0.772}$_{\pm0.125}$ & \textbf{0.796}$_{\pm0.117}$ & \textbf{0.797}$_{\pm0.120}$ & \textbf{ 0.797}$_{\pm0.120}$ \\

\bottomrule
\end{tabular}
\end{table}

\begin{table}[ht]
\centering
\footnotesize
\caption{ Mean AUC for multi-class tasks on TabArena and OpenML-CC18.}
\label{tab:multi-class task}
\setlength{\tabcolsep}{3pt}  

\begin{tabular}{lllccccc}
\toprule
Benchmark & & Model & 100 &  200 & 400 & 800 & 1000 \\
\midrule
 & & CART & \textbf{0.748}$_{\pm0.161}$ & \underline{0.779}$_{\pm0.170}$ & \textbf{0.808}$_{\pm0.180}$ & \underline{0.819}$_{\pm0.178}$ & \underline{0.824}$_{\pm0.176}$ \\
&  &CART$_{\leftarrow \mathrm{TabPFN}}$ & 0.736$_{\pm0.155}$ & \textbf{0.780}$_{\pm0.165}$ & \underline{0.806}$_{\pm0.177}$ & \textbf{0.822}$_{\pm0.179}$ & \textbf{0.826}$_{\pm0.180}$ \\
& & $(\alpha^*, \beta^*)$ Tsallis & \underline{0.744}$_{\pm0.156}$ & 0.762$_{\pm0.167}$ & 0.769$_{\pm0.168}$ & 0.772$_{\pm0.171}$ & 0.771$_{\pm0.170}$ \\
& & DNDT & 0.578$_{\pm0.044}$ & 0.595$_{\pm0.055}$ & 0.605$_{\pm0.068}$ & 0.622$_{\pm0.057}$ & 0.608$_{\pm0.073}$ \\
& & RADDT & 0.724$_{\pm0.153}$ & 0.744$_{\pm0.152}$ & 0.760$_{\pm0.149}$ & 0.780$_{\pm0.159}$ & 0.785$_{\pm0.171}$ \\
& & DL85 & 0.675$_{\pm0.117}$ & 0.722$_{\pm0.142}$ & 0.763$_{\pm0.164}$ & 0.792$_{\pm0.174}$ & 0.798$_{\pm0.170}$ \\
& & MetaTree & 0.703$_{\pm0.147}$ & 0.696$_{\pm0.145}$ & 0.673$_{\pm0.159}$ & 0.670$_{\pm0.155}$ & 0.681$_{\pm0.145}$ \\
& & LLM-Tree & 0.634$_{\pm0.096}$ & 0.654$_{\pm0.105}$ & 0.669$_{\pm0.109}$ & 0.678$_{\pm0.108}$ & 0.679$_{\pm0.107}$ \\
\rowcolor{mtgreen!4} 
& & GradTree & 0.631$_{\pm0.091}$ & 0.723$_{\pm0.141}$ & 0.748$_{\pm0.157}$ & 0.756$_{\pm0.153}$ & 0.757$_{\pm0.159}$ \\
\rowcolor{mtgreen!4} 
& & GradTree$_{\leftarrow \mathrm{TabPFN}}$ & 0.686$_{\pm0.124}$ & 0.715$_{\pm0.146}$ & 0.761$_{\pm0.161}$ & 0.767$_{\pm0.160}$ & 0.763$_{\pm0.165}$ \\
\rowcolor{mtgreen!4} 
& & MotherTree & 0.675$_{\pm0.119}$ & 0.698$_{\pm0.133}$ & 0.716$_{\pm0.136}$ & 0.726$_{\pm0.137}$ & 0.727$_{\pm0.134}$ \\
\rowcolor{mtgreen!4} 
\multirow{-12}{*}{\rotatebox{90}{TabArena}} & \multirow{-4}{*}{\rotatebox{90}{\scriptsize \shortstack{Same tree\\param. space}}} & MotherTree-Tune & 0.728$_{\pm0.159}$ & 0.757$_{\pm0.166}$ & 0.779$_{\pm0.172}$ & 0.778$_{\pm0.174}$ & 0.779$_{\pm0.170}$ \\
\cline{1-7}
 & & CART & \underline{0.767}$_{\pm0.104}$ & \underline{0.802}$_{\pm0.110}$ & \textbf{0.836}$_{\pm0.115}$ & \textbf{0.849}$_{\pm0.117}$ & \textbf{0.853}$_{\pm0.119}$ \\
&  & CART$_{\leftarrow \mathrm{TabPFN}}$ & 0.756$_{\pm0.098}$ & 0.795$_{\pm0.105}$ & \underline{0.835}$_{\pm0.110}$ & \underline{0.846}$_{\pm0.115}$ & \underline{0.849}$_{\pm0.116}$ \\
&  & $(\alpha^*, \beta^*)$ Tsallis & 0.760$_{\pm0.101}$ & 0.780$_{\pm0.115}$ & 0.789$_{\pm0.143}$ & 0.792$_{\pm0.156}$ & 0.793$_{\pm0.158}$ \\
&  & DNDT & 0.620$_{\pm0.065}$ & 0.643$_{\pm0.076}$ & 0.660$_{\pm0.081}$ & 0.671$_{\pm0.080}$ & 0.666$_{\pm0.087}$ \\
&  & RADDT & 0.755$_{\pm0.119}$ & 0.798$_{\pm0.124}$ & 0.822$_{\pm0.133}$ & 0.833$_{\pm0.136}$ & 0.837$_{\pm0.139}$ \\
& & DL85 & 0.686$_{\pm0.093}$ & 0.743$_{\pm0.104}$ & 0.793$_{\pm0.115}$ & 0.814$_{\pm0.123}$ & 0.815$_{\pm0.124}$ \\
& & MetaTree & 0.697$_{\pm0.104}$ & 0.731$_{\pm0.103}$ & 0.744$_{\pm0.097}$ & 0.734$_{\pm0.091}$ & 0.737$_{\pm0.092}$ \\
& & LLM-Tree & 0.718$_{\pm0.115}$ & 0.739$_{\pm0.118}$ & 0.754$_{\pm0.121}$ & 0.758$_{\pm0.123}$ & 0.759$_{\pm0.122}$ \\
\rowcolor{mtgreen!4}
& & GradTree & 0.669$_{\pm0.105}$ & 0.746$_{\pm0.118}$ & 0.786$_{\pm0.117}$ & 0.800$_{\pm0.115}$ & 0.812$_{\pm0.117}$ \\
\rowcolor{mtgreen!4}
& & GradTree$_{\leftarrow \mathrm{TabPFN}}$ & 0.732$_{\pm0.098}$ & 0.774$_{\pm0.104}$ & 0.803$_{\pm0.103}$ & 0.816$_{\pm0.111}$ & 0.818$_{\pm0.111}$ \\
\rowcolor{mtgreen!4}
&  & MotherTree & 0.733$_{\pm0.097}$ & 0.764$_{\pm0.103}$ & 0.787$_{\pm0.103}$ & 0.795$_{\pm0.103}$ & 0.795$_{\pm0.103}$ \\
\rowcolor{mtgreen!4}
\multirow{-12}{*}{\rotatebox{90}{OpenML-CC18\scriptsize -Small}} & \multirow{-4}{*}{\rotatebox{90}{\scriptsize \shortstack{Same tree\\param. space}}}  & MotherTree-Tune & \textbf{0.776}$_{\pm0.111}$ & \textbf{0.804}$_{\pm0.117}$ & 0.831$_{\pm0.108}$ & 0.838$_{\pm0.111}$ & 0.840$_{\pm0.112}$ \\

\bottomrule
\end{tabular}
\end{table}

\newpage
\subsection{Running Speed}


\begin{table}[ht]
\centering
\footnotesize
\caption{Average runtime (tree fitting and test-set inference) comparison on GPU and CPU for OpenML-CC18-Small over varying training set size.}
\label{tab:runtime}
\setlength{\tabcolsep}{5pt}
\begin{tabular}{@{} llccccc @{}}
\toprule
& & \multicolumn{5}{c}{Training set budget, $B$} \\
Hardware & Model & 100 & 200 & 400 & 800 & 1000 \\
\midrule
\multirow{3}{*}{GPU}
& CART$_{\leftarrow \mathrm{TabPFN}}$
& 0.72 & 0.66 & 0.82 & 0.96 & 1.02 \\
& MotherTree
& 0.15 & 0.15 & 0.18 & 0.19 & 0.20 \\
& MotherTree-Tune
& 1.57 & 2.15 & 3.15 & 3.82 & 4.75 \\
\midrule
\multirow{2}{*}{CPU}
& CART$_{\leftarrow \mathrm{TabPFN}}$
& 1.07 & 1.64 & 2.85 & 4.33 & 4.90 \\
& MotherTree
& 0.21 & 0.17 & 0.24 & 0.33 & 0.35 \\
& MotherTree-Tune
& 2.25 & 2.85 & 4.03 & 4.96 & 5.47 \\
\bottomrule
\end{tabular}
\end{table}

Table~\ref{tab:runtime} compares the average runtime (tree fitting and test set inference) of MotherTree and MotherTree-Tune with the TabPFN-distilled CART baseline on
OpenML-CC18-Small. Direct MotherTree inference is consistently the
fastest method on both GPU and CPU, requiring only about
$0.15$--$0.20$ seconds on GPU and $0.17$--$0.35$ seconds on CPU across
the evaluated training-set sizes. Its runtime also grows only mildly
with the number of training samples, reflecting the fact that a
task-specific tree is generated in a single forward pass.

MotherTree-Tune incurs a substantially higher cost because it performs
additional task-specific optimization after tree generation. Its
runtime increases with the training-set size, reaching $4.75$ seconds
on GPU and $5.47$ seconds on CPU at $B=1000$. This additional cost
trades computational efficiency for improved predictive performance,
while remaining within a few seconds per task in the evaluated
settings.

\newpage

\subsection{Complete Critical Difference Diagram}
\label{app:cd}
Figure~\ref{fig:cdds-all-joint} contains critical difference diagrams for all models at all training set budgets, $B$.
\begin{figure}[ht]
    \centering

    \begin{subfigure}{0.49\textwidth}
        \centering
        \includegraphics[width=\textwidth]{figure/join/100/cd_diagram.pdf}
        \caption{$B  = 100$}
        \label{fig:exp1}
    \end{subfigure}
    \hfill
    \begin{subfigure}{0.49\textwidth}
        \centering
        \includegraphics[width=\textwidth]{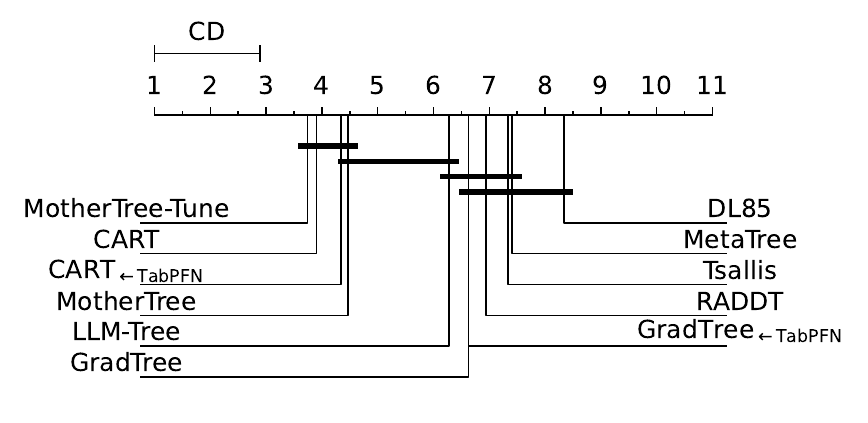}
        \caption{$B  = 200$}
        \label{fig:exp2}
    \end{subfigure}
    \hfill
    \begin{subfigure}{0.49\textwidth}
        \centering
        \includegraphics[width=\textwidth]{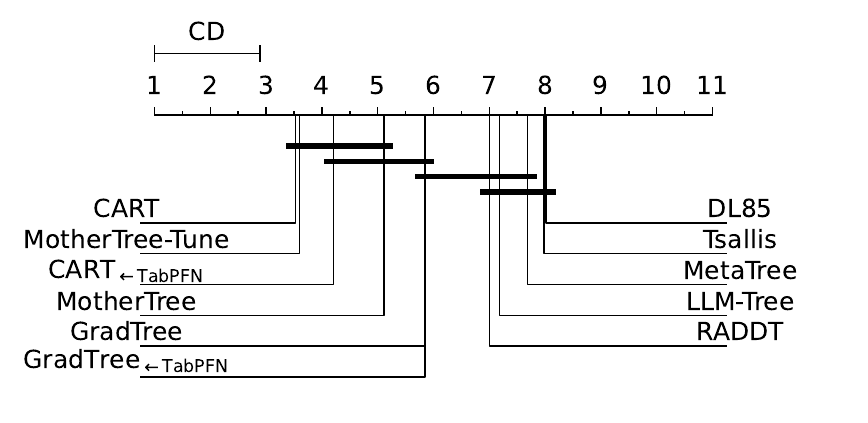}
        \caption{$B = 400$}
        \label{fig:exp3_400}
    \end{subfigure}
    \begin{subfigure}{0.49\textwidth}
        \centering
        \includegraphics[width=\textwidth]{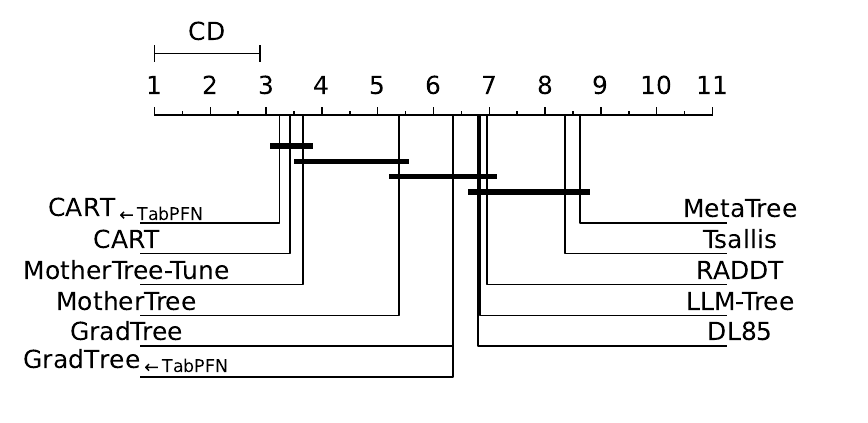}
        \caption{$B = 800 $}
        \label{fig:exp3_800}
    \end{subfigure}

    \begin{subfigure}{0.49\textwidth}
        \centering
        \includegraphics[width=\textwidth]{figure/join/1000/cd_diagram.pdf}
        \caption{$B = 1000$}
        \label{fig:exp3_1000}
    \end{subfigure}
    \caption{Critical difference diagrams with different maximum context lengths for two joint benchmarks}
    \label{fig:cdds-all-joint}
\end{figure}

\newpage

\subsection{XOR Benchmark}
\label{app:xor-benchmark}

\subsubsection{Synthetic XOR data sets with distractor features}
\label{XOR subsection}

\begin{algorithm}[ht]
\caption{XOR Data set Generation Process}
\label{alg:parity_data set}
\begin{algorithmic}[1]

\Require Number of samples $N$, parity bits $k$, 
number of distractors $m$, agreement probability $q$, random seed $s$
\Ensure Data set $(X,y)$ and feature names $\mathcal{F}$

\State Initialize random number generator with seed $s$

\State Sample $B \in \{0,1\}^{N \times k}$ with
       $B_{ij} \sim \mathrm{Bernoulli}(0.5)$

\State Compute parity labels
       $y_i \gets \left(\sum_{j=1}^{k} B_{ij}\right) \bmod 2$
       for $i=1,\ldots,N$

\State $X \gets B$

\State $\mathcal{F} \gets
       \{\texttt{bit\_0},\ldots,\texttt{bit\_}\{k-1\}\}$

\For{$d = 0$ to $m-1$}

    \State Sample noise vector
           $\epsilon^{(d)} \in \{0,1\}^{N}$,
           where $\epsilon^{(d)}_i
           \sim \mathrm{Bernoulli}(1-q)$

    \State Construct distractor
           $z^{(d)} \gets \bigl(y + \epsilon^{(d)}\bigr) \bmod 2$

    \State $X \gets [X \mid z^{(d)}]$

    \State Append \texttt{distract\_$d$} to $\mathcal{F}$

\EndFor

\State \Return $(X,y,\mathcal{F})$

\end{algorithmic}
\end{algorithm}

We set  $k \in \{2, 3, 4, 5\}$ parity bits, which defines the difficulty of addressing the parity problems.  Number of distractors $m$ and $\lambda$ define the quantity of sub-optimal features that trees are allowed to select. 
$q \in \{0.50, 0.55, 0.60, 0.65, 0.70 \}$  to be equals to $y$  and is flipped otherwise. Probability $q$ defines the correlation between distractors and desirable features. High $q$ will mislead the model to achieve high results even if distractors themselves are not optimal.  From the above-mentioned procedure, it will yield $d = k + m \in \{4, 5, 6, 7\}$ binary features and we generate $n = 2000$ samples per data set. The detailed data set generation process is illustrated in Algorithm \ref{alg:parity_data set} and corresponding configuration is listed in Table \ref{tab:xor}.  Appendix \ref{app:distractor_auc} provides the closed form value for AUC ceiling using only distractors. 

In the experiments, we repeat running each $(k, m)$ pair for 10 times with different seeds. We select DL85 as the optimal tree solver to gauge the gap with other tree models. GradTree and CART together with their distilled versions are used as baselines to compare with both MotherTree and MotherTree-Tune.

\begin{table}[ht]
\centering
\caption{XOR/parity data set configurations.  The tree depth is fixed at
$D = 5$ throughout.  $k$ is the number of parity bits ($y = \mathrm{XOR}$ of the
$k$ bits), $m$ the number of distractor features (independent noisy copies of
$y$, each agreeing with it with probability $q$), $d = k + m$ the feature
dimensionality (all features binary) and $\lambda = m - (D - k)$ is the number of
distractors a model must leave \emph{unused} for the parity tree to still fit
below them. $\lambda \leq 0$ indicates model can incorporate all bits even if all distractors are captured.  Every $(k, m)$ pair is generated for
$q \in \{0.50, 0.55, 0.60, 0.65, 0.70\}$; the entries of the $m$, $\lambda$ and
$d$ columns correspond positionally.  One seed per configuration gives $59$
data sets in total.}
\label{tab:xor}
\begin{tabular}{cccccr}
\toprule
$k$ (bits) & $m$ (distractors) & $\lambda$ & $q$ & $d$ & \#data sets \\
\midrule
2 & 2, 3, 4 & $-1,\;0,\;+1$ & \multirow{4}{*}{$0.50$--$0.70$} & 4, 5, 6 & 15 \\
3 & 1, 2, 3 & $-1,\;0,\;+1$ & & 4, 5, 6 & 15 \\
4 & 1, 2, 3 & $0,\;+1,\;+2$ & & 5, 6, 7 & 15 \\
5 & 1, 2    & $+1,\;+2$    & & 6, 7    & 10 \\
\midrule
2, 3, 4, 5 & 0 (parity control) & $-3,\;-2,\;-1,\;0$ & $0.55$ & $k$ & 4 \\
\midrule
\multicolumn{5}{r}{\textbf{total}} & \textbf{59} \\
\bottomrule
\end{tabular}
\end{table}

\subsubsection{Distractor-Only Bayes AUC Ceiling}
\label{app:distractor_auc}

We derive the best population AUC achievable using only the distractor
features. This provides a distractor-only ceiling for the synthetic parity
experiments.

Let $Y\in\{0,1\}$ be balanced and let
$Z=(Z_1,\ldots,Z_m)\in\{0,1\}^m$ denote the distractors. Each distractor is
an independent noisy copy of the label:
\[
\Pr(Z_j=Y\mid Y)=q,\qquad q>\frac12,
\]
and the $Z_j$ are conditionally independent given $Y$.

Define the distractor-count score
\[
S(Z)=\sum_{j=1}^m Z_j.
\]
Let
\[
Z^+ \sim P(Z\mid Y=1),
\qquad
Z^- \sim P(Z\mid Y=0)
\]
denote the distractor vectors of an independently drawn positive and
negative example, respectively. We further define
\[
S_+ := S(Z^+),
\qquad
S_- := S(Z^-).
\]
Thus, $S_+$ is the number of distractors equal to one for a positive
example, while $S_-$ is the corresponding number for a negative example.

\begin{thmprop}
\label{prop:distractor_auc}
The Bayes-optimal distractor-only score $r(z)=\Pr(Y=1\mid Z=z)$ is a strictly increasing function of $S(z)$. Therefore, $\mathrm{AUC}_{\mathrm{dist}}^*(m,q)
=
\Pr(S_+>S_-)
+
\frac12\Pr(S_+=S_-)$, where

$S_+\sim\mathrm{Binomial}(m,q), S_-\sim\mathrm{Binomial}(m,1-q).$
\end{thmprop}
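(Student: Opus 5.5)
The plan has three steps: compute the posterior explicitly by Bayes' rule, observe that it depends on $z$ only through $S(z)$ and strictly increases in it, and then invoke the standard fact that the AUC of a score is invariant under strictly increasing transformations. Fix $z\in\{0,1\}^m$ and write $s=S(z)$.

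\textbf{Posterior.} By conditional independence of the distractors given $Y$,
\[
\Pr(Z=z\mid Y=1)=q^{s}(1-q)^{m-s},\qquad \Pr(Z=z\mid Y=0)=(1-q)^{s}q^{m-s},
\]
since under $Y=1$ a coordinate equals one exactly when it agrees with the label, and under $Y=0$ exactly when it disagrees. With a balanced prior, Bayes' rule gives
\[
r(z)=\frac{1}{1+\bigl(\tfrac{1-q}{q}\bigr)^{2s-m}}.
\]
Because $q>\tfrac12$, the ratio $(1-q)/q$ lies in $(0,1)$. Hence the denominator strictly decreases in $s$, and $r(z)=g(S(z))$ for a strictly increasing function $g$. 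This proves the first claim.

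\textbf{Bayes-optimal AUC.} The AUC of any score $f$ is $\Pr(f(Z^+)>f(Z^-))+\tfrac12\Pr(f(Z^+)=f(Z^-))$. Strictly increasing maps preserve both the events $\{>\}$ and $\{=\}$, so $\mathrm{AUC}(r)=\mathrm{AUC}(S)$. It remains to justify that $r$ attains the maximal AUC among all distractor-only scores. This is the step needing the most care. I would cite the standard Neyman--Pearson argument: ranking by the likelihood ratio $\Pr(z\mid Y=1)/\Pr(z\mid Y=0)$ maximizes the true positive rate at every false positive rate, with randomized tie-breaking, so its ROC curve dominates that of every other score. Ties are exactly what the $\tfrac12\Pr(\cdot=\cdot)$ term accounts for, matching the randomized-threshold ROC. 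The likelihood ratio is itself an increasing function of $r$ and hence of $S$. Alternatively, I would state that the Bayes-optimal AUC is understood as the AUC of the Bayes score $r$, which makes this step definitional.

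\textbf{Distributions.} Under $Y=1$ each $Z_j\sim\mathrm{Bernoulli}(q)$ independently, so $S_+\sim\mathrm{Binomial}(m,q)$. Under $Y=0$ each $Z_j=1$ with probability $1-q$, so $S_-\sim\mathrm{Binomial}(m,1-q)$. $Z^+$ and $Z^-$ are independent draws, so $S_+$ and $S_-$ are independent, and the formula follows. The main obstacle is only the optimality-of-likelihood-ratio citation with ties. Everything else is routine computation.
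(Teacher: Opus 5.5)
Your proposal is correct and follows the paper's proof essentially step for step. Like the paper, you compute the two likelihoods from conditional independence, then apply Bayes' rule with balanced classes to get $r(s)=1/(1+((1-q)/q)^{2s-m})$, which is strictly increasing because $q>1/2$. You then use the invariance of the tie-corrected AUC under strictly increasing maps, and finish with the two binomial laws. Your only addition is the Neyman--Pearson argument that $r$ actually maximizes AUC among distractor-only scores; the paper never argues this and implicitly treats it as the definition of the Bayes-optimal AUC, so your version strengthens the same route rather than changing it.
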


\begin{proof}
Consider a realization $z=(z_1,\ldots,z_m)$ containing
$s=\sum_j z_j$ ones. Conditional independence gives
\[
\Pr(Z=z\mid Y=1)
=
q^s(1-q)^{m-s},
\]
and
\[
\Pr(Z=z\mid Y=0)
=
(1-q)^s q^{m-s}.
\]
Since the classes are balanced, Bayes' rule yields
\[
r(z)
=
\Pr(Y=1\mid Z=z)
=
\frac{
q^s(1-q)^{m-s}
}{
q^s(1-q)^{m-s}
+
(1-q)^s q^{m-s}
}.
\]
Hence $r(z)$ depends on $z$ only through $s=S(z)$, i.e.,
\[
\Pr(Y=1\mid Z=z)
=
\Pr(Y=1\mid S=s).
\]
Equivalently,
\[
r(s)
=
\frac{1}{
1+\left(\frac{1-q}{q}\right)^{2s-m}
}.
\]
Because $q>1/2$, $r(s)$ is strictly increasing in $s$.

AUC is the probability that a randomly drawn positive example receives a
larger score than a randomly drawn negative example, with ties receiving
half credit:
\[
\mathrm{AUC}
=
\Pr(r(Z^+)>r(Z^-))
+
\frac12\Pr(r(Z^+)=r(Z^-)),
\]
where $Z^+\sim P(Z\mid Y=1)$ and
$Z^-\sim P(Z\mid Y=0)$. Since $r$ is strictly increasing in $S$, this is
equivalent to
\[
\mathrm{AUC}
=
\Pr(S_+>S_-)
+
\frac12\Pr(S_+=S_-).
\]
Finally, for a positive example each distractor equals $1$ with probability
$q$, whereas for a negative example it equals $1$ with probability $1-q$.

Therefore,
\[
S_+\sim\mathrm{Binomial}(m,q),
\qquad
S_-\sim\mathrm{Binomial}(m,1-q),
\]
which proves the result.
\end{proof}

For the values of $m$ used in our experiments, the resulting closed forms
are
\[
\mathrm{AUC}_{\mathrm{dist}}^*(1,q)=q, \qquad \mathrm{AUC}_{\mathrm{dist}}^*(2,q)=3q^2-2q^3,
\]
\[
\mathrm{AUC}_{\mathrm{dist}}^*(3,q)
=
10q^3-15q^4+6q^5, \qquad 
\mathrm{AUC}_{\mathrm{dist}}^*(4,q)
=
35q^4-84q^5+70q^6-20q^7.
\]

Since all $k$ parity bits deterministically determine the label,
\[
Y=X_1\oplus\cdots\oplus X_k,
\]
the full-feature optimum is $\mathrm{AUC}^*_{\mathrm{full}}=1$ whenever the
model class can represent the depth-$k$ parity tree. Therefore, population
AUC significantly above
$\mathrm{AUC}_{\mathrm{dist}}^*(m,q)$ cannot be explained by the
distractors alone and provides evidence that the predictor exploits
information contained in the parity features.

\subsubsection{Figures in all settings for Parity(XOR) tasks}
\label{app:xor_full_fig}

\begin{figure}[ht]
    \centering
    \vfill
    \begin{minipage}[t]{\linewidth}
        \centering
        \includegraphics[width=\linewidth]
        {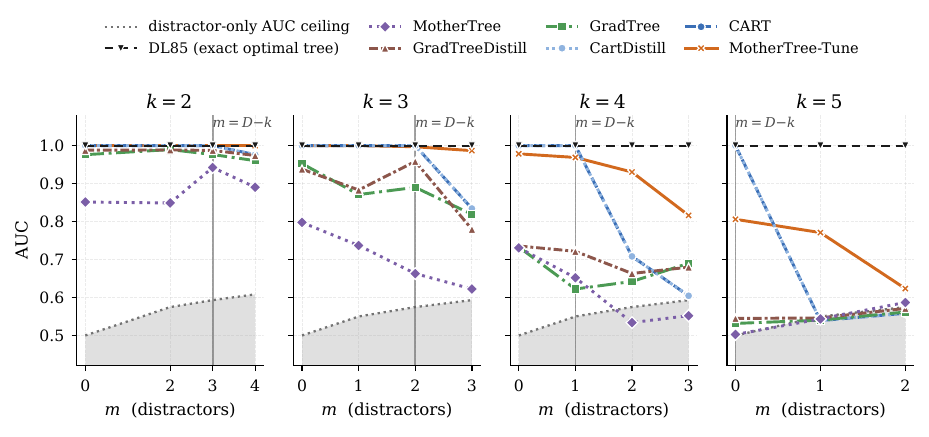}
        \subcaption{AUC against the number of distractor features $m$. }
        \label{fig:xor_panels_app}
    \end{minipage}
    \vfill
    \begin{minipage}[t]{\linewidth}
        \centering
        \includegraphics[width=\linewidth]
        {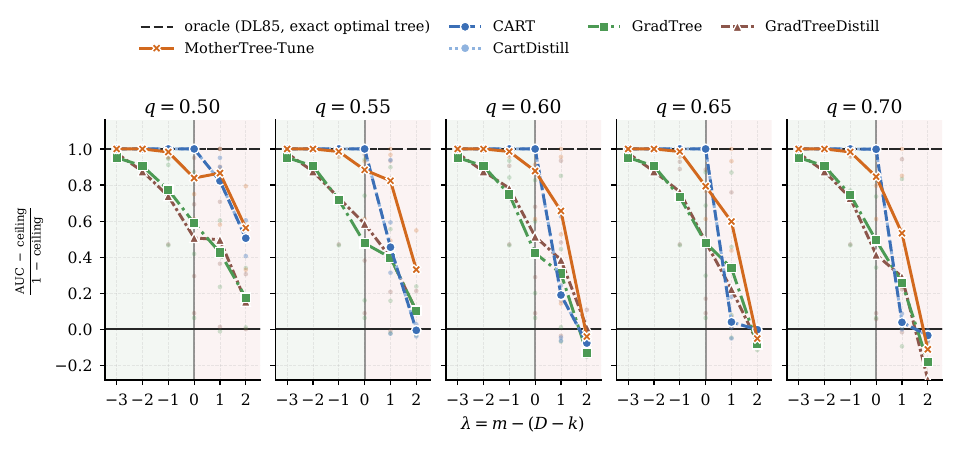}
        \subcaption{Share of AUC under different $q$(label correlations)}. 
        \label{fig:xor_q_facet_app}
    \end{minipage}
    \caption{%
        Parity with distractors at a fixed depth budget $D = 5$.
        $\lambda = m - (D-k)$ is the number of distractors that must be left to learn the parity tree. Green ($\lambda \le 0$): the depth budget has slack.  Splitting on a distractor does not destroy the label. 
Red ($\lambda > 0$): the budget is exhausted, so each distractor used takes a
level away from the parity tree; at least $\lambda$ of them must be refused.}
\label{fig:xor_app}
\end{figure}

\end{document}

%% file: math_commands.tex
\usepackage{amsmath,amsfonts,bm}

\newtheorem{thmprop}{Proposition}

\newtheorem*{thmidasmp*}{Identifying assumptions}

\theoremstyle{definition}

\newtheorem*{thmrem*}{Remark}
\newtheorem*{thmprop*}{Proposition}

\def\eqref#1{equation~\ref{#1}}

\def\1{\bm{1}}

\DeclareMathAlphabet{\mathsfit}{\encodingdefault}{\sfdefault}{m}{sl}
\SetMathAlphabet{\mathsfit}{bold}{\encodingdefault}{\sfdefault}{bx}{n}

\def\gP{{\mathcal{P}}}

